\newcommand{\ie}{\textit{i.e., }}
\newcommand{\eg}{\textit{e.g., }}
\newtheorem{theorem}{Theorem}
\newtheorem{definition}{Definition}
\title{Demystifying Reasoning Dynamics with Mutual Information: Thinking Tokens are Information Peaks in LLM Reasoning}
\author{%
\textbf{Chen Qian}\textsuperscript{1,2{$\star$}},
\textbf{Dongrui Liu}\textsuperscript{2{$\star$}}, 
\textbf{Haochen Wen}\textsuperscript{3}, 
\textbf{Zhen Bai}\textsuperscript{4}, 
\textbf{Yong Liu}\textsuperscript{1}$^{\dag}$, 
\textbf{Jing Shao}\textsuperscript{2}$^{\dag}$\\
$^1$ Gaoling School of Artificial Intelligence, Renmin University of China \\
$^2$ Shanghai Artificial Intelligence Laboratory \ \  \\
$^3$ University College London, University of London \ \ $^4$ Dalian University of Technology \\
\tt\footnotesize\{qianchen2022,liuyonggsai\}@ruc.edu.cn~~ \{liudongrui,shaojing\}@pjlab.org.cn\\
}
\begin{document}

\maketitle

\let\thefootnote\relax\footnotetext{$^\star$ Equal contribution\hspace{3pt} \hspace{5pt}$^{\dag}$ Corresponding author\hspace{5pt}}

\providecommand{\thefootnote}{}
\setcounter{footnote}{0}                       
\renewcommand{\thefootnote}{\arabic{footnote}} 

\begin{abstract}
Large reasoning models (LRMs) have demonstrated impressive capabilities in complex problem-solving, yet their internal reasoning mechanisms remain poorly understood.
In this paper, we investigate the reasoning trajectories of LRMs from an information-theoretic perspective. 
By tracking how mutual information (MI) between intermediate representations and the correct answer evolves during LRM reasoning, we observe an interesting \textit{MI peaks} phenomenon: \textbf{the MI at specific generative steps exhibits a sudden and significant increase during LRM's reasoning process}. 
We theoretically analyze such phenomenon and show that as MI increases, the probability of model's prediction error decreases.
Furthermore, \textbf{these \textit{MI peaks} often correspond to tokens expressing reflection or transition, such as ``Hmm'', ``Wait'' and ``Therefore,''} which we term as the \textit{thinking tokens}.
We then demonstrate that these \textit{thinking tokens} are crucial for LRM's reasoning performance, while other tokens has minimal impacts.
Building on these analyses, we propose two simple yet effective methods to improve LRM's reasoning performance, by delicately leveraging these \textit{thinking tokens}.
Overall, our work provides novel insights into the reasoning mechanisms of LRMs and offers practical ways to improve their reasoning capabilities.
The code is available at \url{https://github.com/ChnQ/MI-Peaks}.
\end{abstract}

\section{Introduction}
\label{sec_intro}

The reasoning ability of large language models (LLMs) has emerged as one of their most powerful and crucial capabilities~\citep{wei2022chain,huang-chang-2023-towards,jaech2024openai}. By explicitly thinking through a question before providing an answer and breaking down complex problems into multiple steps, LLMs have made impressive progress in complex reasoning tasks, such as mathematics, programming, and logical inference~\citep{lightman2024lets,yao2023tree,team2025kimi,chen2025towards}.
Understanding and improving LLMs' reasoning  ability represents a crucial pathway toward achieving Artificial General Intelligence (AGI)~\citep{xu2025towards,wang2025multimodal,sun2023survey}.

By undergoing reasoning-intensive training on foundational LLMs, recent large reasoning models (LRMs) such as OpenAI’s o1~\citep{jaech2024openai}, DeepSeek’s R1~\citep{guo2025deepseek}, and QwQ~\cite{qwq32b} have demonstrated exceptional reasoning capabilities, significantly pushing the boundaries of complex problem-solving.
However, despite recent advances, the mechanisms underlying these capabilities remain largely under-explored. 
The internal dynamics of the reasoning process, as well as the influence of each intermediate step on the final answer, are still largely a ``black box.''
While some research in the field of trustworthy AI suggests the existence of "critical tokens" that directly impact the safety of the LLM's answers~\citep{zou2023universal,lin2024the,qi2025safety}, a natural question arises: \textit{are there critical reasoning steps or intermediate states that significantly affect the final results in the reasoning process of LRMs?}


In this paper, we explore this question from an information-theoretic~\citep{ash2012information,kraskov2004estimating} perspective.
Specifically, given a question, we dynamically calculate the mutual information (MI) between the LRM's representation at each step of reasoning process and the golden answer (\ie the ground-truth response), observing how the MI evolves. 
Interestingly, we find that \textbf{certain steps' representations exhibit a sudden and significant increase in MI with the golden answer}. As shown in Figure~\ref{fig_fig1}(a), these representations with MI peaks are sparse and occur non-uniformly throughout the reasoning process. This suggests that at certain crucial reasoning steps, LRMs’ representation becomes highly informative about the correct answer. Naturally, this raises a question: \textit{are these MI peaks potentially related to model’s reasoning performance?}
Theoretically, we provide preliminary insights into the MI peaks phenomenon, demonstrating that as the cumulative MI between the representations and the golden answer increases, the probability of LRM’s wrong prediction lowers.
Furthermore, our experiments show that the base models corresponding to these LRMs (\eg LLaMA-3.1-8B~\citep{grattafiori2024llama}),  does not exhibit this MI Peaks phenomenon as clearly.
These analyses suggest that the distinct MI peaks observed during LRM reasoning are potentially stemming from the reasoning-intensive training, and may hold a potential relationship with LRM's advanced reasoning abilities.

\begin{figure}
    \centering
    \includegraphics[width=0.99\linewidth]{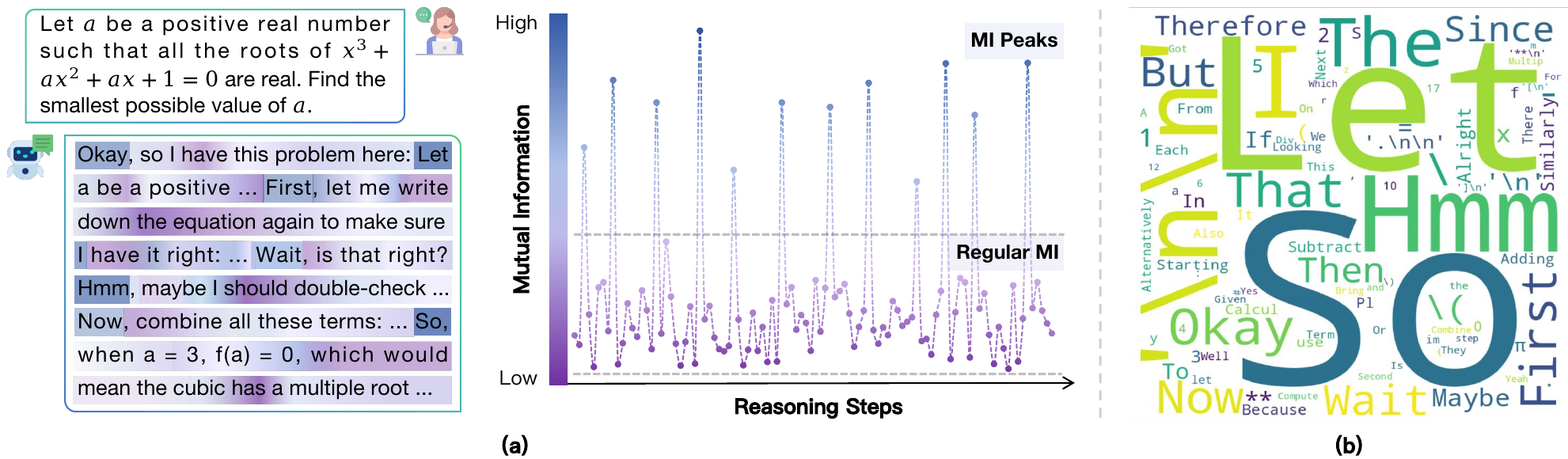}
    \vspace{-2pt}
    \caption{Illustration and analysis of the MI peaks phenomenon in LRM reasoning. (a) The \textbf{left} side shows an example of an LRM performing a multi-step reasoning task. To investigate the underlying reasoning mechanism, we compute the MI between the model’s representation at each step and the golden answer. Interestingly, as shown on the \textbf{right} side, certain steps exhibit sudden and significant increases in MI, which we refer to the MI peaks phenomenon.
    (b) Token distribution at MI peaks. We further find that the tokens generated at these high-MI steps are often reflective or transitional expressions such as ``So,'' ``Hmm,'' and ``Wait.''}
    \label{fig_fig1}
    \vspace{-12pt}
\end{figure}

This naturally leads to the question: \textit{what semantic roles do the representations at MI peaks play during reasoning?}
Intriguingly, we find that \textbf{these representations with MI peaks predominantly correspond to tokens such as ``Wait,'' ``Hmm,'' ``Therefore,'' ``So,'' which typically express reflectiveness, self-correcting, or transitions}, as shown in Figure~\ref{fig_fig1}(b).
Here, we refer to these tokens with MI peaks as ``thinking tokens''. Since these thinking tokens explicitly prompt the model to reflect and reason, and their representations carry enriched information with the golden answer, we hypothesize that these thinking tokens may play a critical role in the model’s reasoning ability.
To validate this hypothesis, we suppress the generation of these thinking tokens and observe how the model’s reasoning performance changes. As shown in Figure~\ref{fig_validation}, fully suppressing the generation of these thinking tokens significantly harms the model’s reasoning performance, while randomly suppressing the same number of tokens has little impact.
This indicates that these thinking tokens are indeed crucial to LRM’s reasoning ~ability.

Finally, drawing insights from the above analyses, we propose to improve the reasoning performance of LRMs in two  training-free ways.
1) By allowing the representations at MI Peaks to undergo multiple iterations within the model, we propose a method called Representation Recycling (RR). 
RR encourages the model to better exploit these informative representations. Experiments show that RR consistently improves the LRMs' reasoning performance across several benchmarks. For instance, it improves the accuracy of DeepSeek-R1-Distill-LLaMA-8B by 20\% relatively on AIME24.
2) Motivated by our analysis of \textit{thinking tokens}, we propose Thinking Token based Test-time Scaling (TTTS). That is, when additional token budget remains, we force the model to continue reasoning by begin with the \textit{thinking tokens}. Experiments show that TTTS leads to steady performance improvements as the token budget increases compared to the original LRMs.
These applications further demonstrate that our observations can offer new insights into enhancing the reasoning abilities of LRMs.

\section{Emergence of MI Peaks in LRMs' Reasoning Trajectories}
\label{sec_mi_peaks}

Despite the impressive reasoning capabilities demonstrated by recent LRMs such as DeepSeek's R1 series models~\citep{guo2025deepseek} and Qwen's QwQ~\citep{qwq32b}, the underlying mechanisms driving these capabilities remain poorly understood.
In this section, we investigate the reasoning trajectories of LRMs from an information-theoretic perspective.
We begin by introducing the notations and preliminaries (Section~\ref{subsec_mi_peaks_preli}). In Section~\ref{subsec_mi_peaks_lrm}, we demonstrate the MI peaks phenomenon. We then provide theoretical insights into this phenomenon in Section~\ref{subsec_theorem}. Finally, we examine whether similar patterns emerge in the corresponding non-reasoning LLMs of LRMs in Section~\ref{subsec_mi_peaks_basemodel}.

\subsection{Preliminaries}
\label{subsec_mi_peaks_preli}

\textbf{Extracting representations in LRM generation process.}
Given a data sample \(s = (x, y)\), where \(x\) is the input query and \(y\) is the corresponding golden answer. 
For a LLM \(\mathcal{M}\), when prompted with \(x\), it auto-regressively generates
$\hat y \;=\;\{\hat y_1, \hat y_2, \dots, \hat y_T\},$
where \(T\) is the total number of tokens and \(\hat y_t\) denotes the token produced at step \(t\).
To analyze the dynamic generation process, we collect the hidden representation corresponding to each generated token. Let \(\mathcal{A}^l_i(\cdot)\) denote the representation extraction function that extracts the representation of the \(i\)-th token at layer \(l\) of a LLM when given an input.
For simplicity, we omit the superscripts and subscripts on \(\mathcal{A}\). In this way, the representation corresponding to the \(t\)-th generated token is denoted by
$\bm h_t = \mathcal{A}\bigl(\mathcal{M}(x, \hat y_{<t})\bigr),$
where \(\hat y_{<t}\) denotes the subsequence of $\hat{y}$ before the $t$-th token.
Similarly, we also extract the representation of the gold answer by feeding \(y\) into the LLM, \eg
$\bm h_y = \mathcal{A}\bigl(\mathcal{M}(y)\bigr).$

\textbf{Estimating MI between each generated token and golden answer.} After extracting the representation,
we then measure the MI between each generated token's representation \(\bm h_t\) and the golden answer's representation \(\bm h_y\), obtaining a MI sequence:
    $\label{eq_mi_seq}
    I[\bm h_1;\,\bm h_y], I[\bm h_2;\,\bm h_y], \dots, I[\bm h_T;\,\bm h_y].$
In this way, we observe how MI evolves, thus analyze the reasoning dynamics during LLM's generation process.
Specifically, we follow \cite{ma2020hsic,qian2024towards,gan2025rethinking} to use the Hilbert-Schmidt Independence Criterion (HSIC) ~\citep{gretton2005measuring} to estimate MI~\citep{kraskov2004estimating,poole2019variational}. The formal definition of HSIC is stated in Definition~\ref{defi_hsic}, and we provide more implementation details in Appendix~\ref{appendix_exp_details}.

\begin{definition}[Hilbert-Schmidt Independence Criterion (HSIC) \citep{gretton2005measuring}]
\label{defi_hsic}
$\operatorname{HSIC}$ is the Hilbert-Schmidt norm of the cross-covariance operator between the distributions in Reproducing Kernel Hilbert Space (RKHS). Formally:
\begin{equation}
    \begin{aligned}
        \operatorname{HSIC}(X, Y) =& \mathbb{E}_{X Y X^{\prime} Y^{\prime}}
        \left[k_X\left(X, X^{\prime}\right) k_Y\left(Y, Y^{\prime}\right)\right]  
        + \mathbb{E}_{X X^{\prime}}\left[k_X\left(X, X^{\prime}\right)\right] 
        \mathbb{E}_{Y Y^{\prime}}\left[k_Y\left(Y, Y^{\prime}\right)\right]  \\
        -& 2 \mathbb{E}_{X Y}\left[\mathbb{E}_{X^{\prime}}\left[k_X\left(X, X^{\prime}\right)\right] \mathbb{E}_{Y^{\prime}}\left[k_Y\left(Y, Y^{\prime}\right)\right]\right], 
    \end{aligned}
\end{equation}
where $X^{\prime}$, $Y^{\prime}$ are independent copies of $X$, $Y$, respectively, and $k_X$ , $k_Y$ are kernel functions.
\end{definition}

\begin{figure}[t!]
    \centering
    \includegraphics[width=0.95\linewidth]{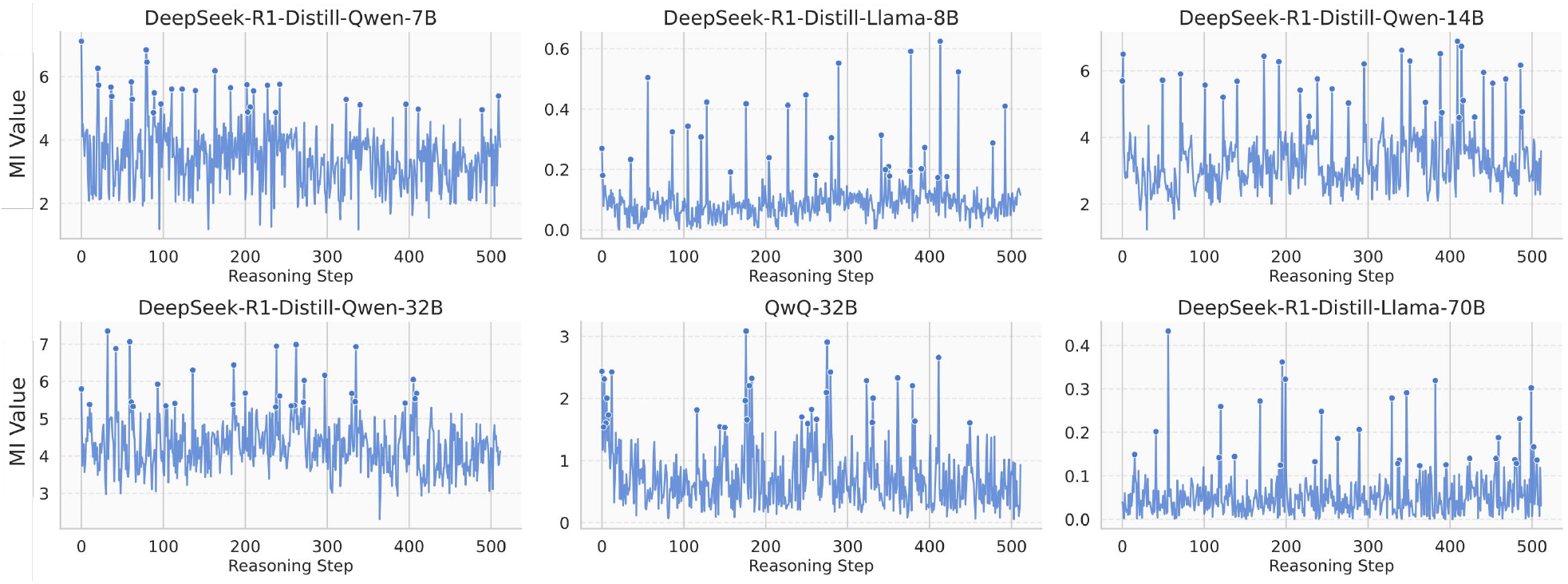}
    \vspace{-10pt}    
    \caption{The evolution trajectories of MI between each step's representations and the golden answer during the reasoning process in LRMs.}
    \label{fig_mi_peaks}
    \vspace{-10pt}
\end{figure}

\subsection{Investigating LRM's Reasoning Trajectories with MI}
\label{subsec_mi_peaks_lrm}
In this subsection, we track how the MI between each step's representation and the gold answer evolves, following the procedure in Section~\ref{subsec_mi_peaks_preli}.
Specifically, we conduct experiments on several popular LRMs of varying scales, including the DeepSeek-R1-Distill series~\citep{guo2025deepseek} and QwQ-32B~\citep{qwq32b}. We use the training split of the MATH dataset~\citep{hendrycks2021measuring}, which comprises 12k competition-level mathematics problems, each accompanied by a detailed step-by-step solution.

\textbf{Certain steps exhibit sudden and significantly increases in MI during the reasoning process of LRMs.} Figure~\ref{fig_mi_peaks} shows the MI evolution trajectories for one data sample during LRMs generation\footnote{Results for more examples and more LRMs are reported in Appendix~\ref{appendix_additional_exp}.}. Surprisingly, across all tested LRMs, we observe a consistent pattern: while most steps exhibit relatively low and stable MI values as reasoning proceeds, certain steps' MI suddenly and significantly increases.
We refer to these steps with abrupt increase in MI as the \emph{MI peaks}. Formally, we define MI peaks as follows:

\begin{definition}[MI Peak]
\label{defi_mi_peak}
Given a MI sequence $\{m_t\}_{t=1}^T$, let $Q_1$, $Q_3$ denote the 25-th percentile (first quartile), and the 75-th percentile (third quartile) of the sequence, respectively. We then define $\mathrm{IQR}(m) = Q_3 - Q_1$ as the inter-quartile range.
    In this way, we identify the set of MI peaks as
       \[
         \mathcal{O}
         = \bigl\{\,t :  m_t > Q_3 + \tau \,\mathrm{IQR}(m)\bigr\},
       \]
    where $\tau$ is a scale factor. Empirically, we set $\tau$ to $1.5$ 
 \cite{tukey1977exploratory}. 
\end{definition}

\textbf{MI peaks are sparse and distribute non‐uniformly throughout the total reasoning process.} As shown in Table~\ref{tab_mi_peak_stats}, MI peaks occur quite sparsely in the reasoning processes of  LRMs, accounting for no more than 5\% of all reasoning steps. Notably, for DeepSeek‐R1‐Distill‐Qwen‐7B, the MI peak ratio is only 0.51\%. Despite this sparsity, these MI peaks are scattered across the entire reasoning trajectory, as illustrated in Figure~\ref{fig_mi_peaks}. Moreover, the interval statistics reported in Table~\ref{tab_mi_peak_stats} indicate that MI peaks do not occur at uniform intervals. Such a sparse and non‐uniform distribution pattern suggests that MI peaks may emerge opportunistically at key moments during reasoning.

\begin{table}[t!]
\centering
\caption{Statistical properties of MI peaks across different LRMs. Here, \textit{\#MI Peaks} and \textit{\#All Steps} refer to the number of MI peaks and the total number of reasoning steps, respectively.
\textit{Interval of MI Peaks} denotes the number of steps between two consecutive MI peaks.}
\label{tab_mi_peak_stats}
\setlength{\tabcolsep}{3pt}
\scalebox{0.88}{
\begin{tabular}{lrrrrrr}
\toprule
\midrule
Model & \#MI Peaks & \#All Steps & \makecell{Ratio of \\MI Peaks} & \makecell{Max Interval\\ of MI Peaks}  & \makecell{Min Interval\\ of MI Peaks} & \makecell{Avg Interval\\ of MI Peaks} \\
\midrule
DeepSeek-R1-Distill-Qwen-7B     &  2.57 & 507.97 & 0.0051 & 152.67 & 52.74 & 87.38 \\
DeepSeek-R1-Distill-Llama-8B    & 24.54 & 511.03 & 0.0480 &  69.37 &  6.65 & 27.84 \\
DeepSeek-R1-Distill-Qwen-14B    & 18.30 & 510.09 & 0.0359 &  85.50 &  5.33 & 31.09 \\
DeepSeek-R1-Distill-Qwen-32B    & 10.82 & 511.22 & 0.0212 & 138.07 & 19.35 & 59.30 \\
QwQ-32B                            &  5.41 & 489.80 & 0.0110 & 167.85 & 19.35 & 66.53 \\
DeepSeek-R1-Distill-Llama-70B   & 16.60 & 512.00 & 0.0324 &  93.03 &  6.77 & 34.71 \\
\bottomrule
\end{tabular}
}
\vspace{-8pt}
\end{table}

\subsection{Theoretical Insights: Higher MI Leads to Tighter Bounds on Prediction Error}
\label{subsec_theorem}
In Section~\ref{subsec_mi_peaks_lrm}, our empirical exploration reveals the emergence of MI peaks in LRMs' reasoning trajectories, indicates that certain representations encode substantially rich information about the gold answer.
This raises a natural question: \emph{would such pattern be potentially related to the LRM's reasoning performance?}
In this subsection, we provide theoretical insights into this question, showing that higher MI between the representations and the gold answer yields tighter lower and upper bounds on the model’s prediction error.

\begin{theorem}
\label{thm_lower}
Consider a sequence of representations $\bm h_1, \bm h_2, \dots, \bm h_T$ during an LLM's reasoning process, where $T$ denotes the number of total reasoning steps. Let $y$, $\hat{y}$ denote the golden answer and the LLM's prediction answer, respectively.
Define $p_e = \Pr (\hat{y} \neq y)$ as the LLM's prediction error probability.
Then the following inequality holds:
\begin{equation}
\label{eq-thm1}
p_e \;\ge\;\frac{1}{\log\bigl(|\mathcal{Y}|-1\bigr)}\Bigl[
H(y)
\;-\;
\sum_{j=1}^{T} I\bigl(y;\,\bm h_j \mid \bm h_{<j}\bigr)
\;-\;
H_b(p_e)
\Bigr],
\end{equation}
where $|\mathcal{Y}|$ is the size of the support of $y$, and $H_b(p_e)$ denote the binary entropy of $p_e$ that defined by
\begin{equation}
    \label{eq-binary-entropy}
    H_b(p_e) = -p_e \log p_e - (1-p_e)\log(1-p_e).
\end{equation}
\end{theorem}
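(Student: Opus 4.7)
The bound has the unmistakable shape of Fano's inequality, so the plan is to (i) invoke Fano to pass from the error probability $p_e$ to the conditional entropy $H(y\mid\hat y)$, and (ii) use the Markov structure of the reasoning pipeline together with the chain rule for mutual information to replace $I(y;\hat y)$ by a telescoped sum of conditional MIs along the trajectory $\bm h_1,\dots,\bm h_T$.

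First, I would set up the probabilistic model explicitly: the answer $y$ is a random variable on a finite support $\mathcal Y$, the representations $\bm h_{1:T}$ are produced by the LRM as a (possibly stochastic) function of an input coupled to $y$, and the predicted answer $\hat y$ is extracted from $\bm h_{1:T}$. This yields the Markov chain $y \to \bm h_{1:T} \to \hat y$, so the data processing inequality gives $I(y;\hat y)\le I(y;\bm h_{1:T})$, equivalently $H(y\mid \hat y)\ge H(y\mid \bm h_{1:T})$.

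Second, I would apply the chain rule for mutual information to decompose
\begin{equation}
I(y;\bm h_{1:T}) \;=\; \sum_{j=1}^{T} I\bigl(y;\bm h_j \mid \bm h_{<j}\bigr),
\end{equation}
so that, combined with the identity $H(y\mid \bm h_{1:T}) = H(y) - I(y;\bm h_{1:T})$, one obtains
\begin{equation}
H(y\mid \hat y) \;\ge\; H(y) \;-\; \sum_{j=1}^{T} I\bigl(y;\bm h_j \mid \bm h_{<j}\bigr).
\end{equation}
Third, I would apply the standard Fano inequality for a classifier over $\mathcal Y$,
\begin{equation}
H(y\mid \hat y) \;\le\; H_b(p_e) \;+\; p_e\,\log\bigl(|\mathcal Y|-1\bigr),
\end{equation}
and chain it with the previous display. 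Rearranging for $p_e$ and dividing by $\log(|\mathcal Y|-1)$ yields exactly \eqref{eq-thm1}.

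The only genuinely delicate point is the first step: justifying the Markov chain $y\to\bm h_{1:T}\to\hat y$ in a way that is faithful to how an LRM actually operates (the representations are generated conditional on an input, not on $y$ directly, and $\hat y$ is decoded from the final representations). I would handle this by stating it as a modeling assumption inherent to the auto-regressive decoding setup introduced in Section~\ref{subsec_mi_peaks_preli}, after which the rest of the argument is a routine combination of the chain rule, DPI, and Fano's inequality; no sharp estimates or nontrivial calculations are required.
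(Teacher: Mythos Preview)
Your proposal is correct and matches the paper's argument essentially step for step: Fano's inequality for $H(y\mid\hat y)$, DPI from the Markov structure $y\to\bm h_{1:T}\to\hat y$, and the chain rule $I(y;\bm h_{1:T})=\sum_{j=1}^{T}I(y;\bm h_j\mid\bm h_{<j})$. The only cosmetic difference is that the paper re-derives Fano's inequality from the indicator $E=\mathbf{1}\{\hat y\neq y\}$ rather than citing it, and it justifies the DPI step simply by noting $\hat y=f(\bm h_{1:T})$ rather than framing it as an explicit Markov-chain modeling assumption.
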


\textbf{Remark 1.} Theorem~\ref{thm_lower} establishes a lower bound on the LLM's prediction error $p_e$. Intuitively, it suggests that for an LLM to achieve a low error rate, its sequence of internal representations during generation should capture more information about the golden answer. In other words, higher MI throughout the generation trajectory may help lower model's minimal achievable error.

\begin{theorem}
\label{thm_upper}
Following the notations in Theorem~\ref{thm_lower}, the following inequality holds:
\vspace{-6pt}
\begin{equation}
\label{eq-thm2}
p_e \;\le\;\frac{1}{2}\Bigl[
H(y)
\;-\;
\sum_{j=1}^{T} I\bigl(y;\,\bm h_j \mid \bm h_{<j}\bigr)
\Bigr].
\end{equation}
\vspace{-10pt}
\end{theorem}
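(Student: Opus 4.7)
The plan is to first collapse the sum of conditional mutual informations into a single conditional entropy via the chain rule, and then apply a classical ``reverse-Fano'' (Kovalevskij-type) upper bound relating the Bayes error to the conditional entropy of the label. Concretely, by the chain rule for mutual information, $\sum_{j=1}^{T} I(y;\bm h_j \mid \bm h_{<j}) = I(y;\bm h_1,\ldots,\bm h_T)$, so the right-hand side of (\ref{eq-thm2}) rewrites as $\tfrac{1}{2}\,H(y\mid \bm h_1,\ldots,\bm h_T)$. It therefore suffices to prove the reduced inequality $p_e \le \tfrac{1}{2}\,H(y\mid \bm h_1,\ldots,\bm h_T)$.

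Next I would establish this reduced inequality pointwise in the conditioning variable. Fix a realization $h$ of $(\bm h_1,\ldots,\bm h_T)$ and set $p_e(h) = 1 - \max_{y} \Pr(y\mid \bm h_1,\ldots,\bm h_T = h)$. Merging all non-modal labels into a single ``error'' class turns the conditional distribution of $y$ given $h$ into a Bernoulli$(p_e(h))$ variable; since grouping outcomes can only decrease entropy, this yields $H(y\mid \bm h_1,\ldots,\bm h_T = h) \ge H_b(p_e(h))$.

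The final analytic ingredient is the chord inequality $H_b(p) \ge 2p$ for $p\in[0,\tfrac12]$, which is immediate from the concavity of $H_b$ together with $H_b(0)=0$ and $H_b(\tfrac12)=1$ (so $H_b$ lies above its chord from the origin to $(\tfrac12,1)$). Chaining the two inequalities gives $p_e(h) \le \tfrac{1}{2} H_b(p_e(h)) \le \tfrac{1}{2} H(y\mid \bm h_1,\ldots,\bm h_T = h)$, and taking expectation over $(\bm h_1,\ldots,\bm h_T)$ delivers the claimed bound via the tower property.

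The main obstacle, in my view, is reconciling the inequality's natural setting (the MAP/Bayes error) with the theorem's definition of $p_e$ in terms of the LLM's actual prediction $\hat y$: the step $p_e(h)\le \tfrac12 H_b(p_e(h))$ implicitly uses $p_e(h)\in[0,\tfrac12]$, which is guaranteed for the MAP decoder but not for an arbitrary predictor. I would therefore either interpret $p_e$ as the minimum error achievable from the representations $(\bm h_1,\ldots,\bm h_T)$, or assume $\hat y$ is the MAP decision based on these representations. A secondary technicality is that $H_b(p)\ge 2p$ requires entropy to be measured in bits, consistent with the $\log(|\mathcal Y|-1)$ normalization already appearing in Theorem~\ref{thm_lower}.
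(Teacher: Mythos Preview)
Your reduction to $p_e \le \tfrac12 H(y\mid \bm h_{1:T})$ via the chain rule is the same as the paper's, but the pointwise argument has a genuine gap. You claim that $p_e(h)\in[0,\tfrac12]$ ``is guaranteed for the MAP decoder,'' and this is false whenever $|\mathcal Y|\ge 3$: the Bayes error $1-\max_i \Pr(y=i\mid h)$ can be as large as $1-1/|\mathcal Y|$. Concretely, take the uniform distribution over three labels; then $p_e(h)=2/3$, while $H_b(2/3)\approx 0.918 < 2\cdot(2/3)$, so your chord inequality $H_b(p)\ge 2p$ fails and the chain $p_e(h)\le \tfrac12 H_b(p_e(h))\le \tfrac12 H(y\mid h)$ breaks at the first step. (The final inequality $p_e(h)\le \tfrac12 H(y\mid h)$ does still hold in that example, since $\tfrac12\log 3\approx 0.79 \ge 2/3$; the problem is that merging \emph{all} non-modal labels at once throws away exactly the entropy you need.)

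The paper's proof avoids this by induction on $|\mathcal Y|$: at each step it merges only \emph{one} non-modal class with the mode, applies the induction hypothesis to the $(m{-}1)$-class merged distribution, and uses the grouping axiom together with the binary bound on the two merged outcomes. Because the mode is one of the two merged outcomes, the relevant binary probability is always at most $\tfrac12$, so the chord inequality is applied only where it is valid. Your argument can be repaired along the same lines, but the one-shot ``collapse to Bernoulli'' shortcut does not work for $|\mathcal Y|\ge 3$. Your observation that the theorem, as stated for the LLM's actual prediction $\hat y$, really proves a bound on the Bayes error is well taken; the paper's proof silently makes the same identification.
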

\textbf{Remark 2.}
Theorem~\ref{thm_upper} provides an upper bound on the prediction error $p_e$, which complements the lower bound in Theorem~\ref{thm_lower}. It demonstrates that a higher cumulative MI between the sequence of representations and the golden answer leads to a tighter upper bound on LLM's error probability.

\textbf{Remark 3.}
In summary, Theorems~\ref{thm_lower} and \ref{thm_upper} jointly suggest that, higher cumulative MI between representations during reasoning and the golden answer leads to a tighter upper and lower bounds on the model’s error probability. In other words, the model is more likely to arrive at the correct answer.
Notably, the presence of MI peaks can effectively increase this cumulative MI, thereby potentially helping LLMs to perform more accurate reasoning.

\subsection{Will Non-reasoning LLMs also Exhibit the MI Peaks Phenomenon?}
\label{subsec_mi_peaks_basemodel}

Since the MI Peaks phenomenon is commonly observed in LRMs, \textit{would non-reasoning LLMs} (\ie foundation LLMs not specifically strengthened for complex reasoning, such as Llama-3.1-8B~\citep{grattafiori2024llama}) \textit{also exhibit similar behavior?}
To explore this question, we select the corresponding non-reasoning counterparts of the DeepSeek-R1-Distill series models and follow the workflow described in Section~\ref{subsec_mi_peaks_preli} to conduct experiments.

\begin{figure}[t!]
    \centering
    \includegraphics[width=0.95\linewidth]{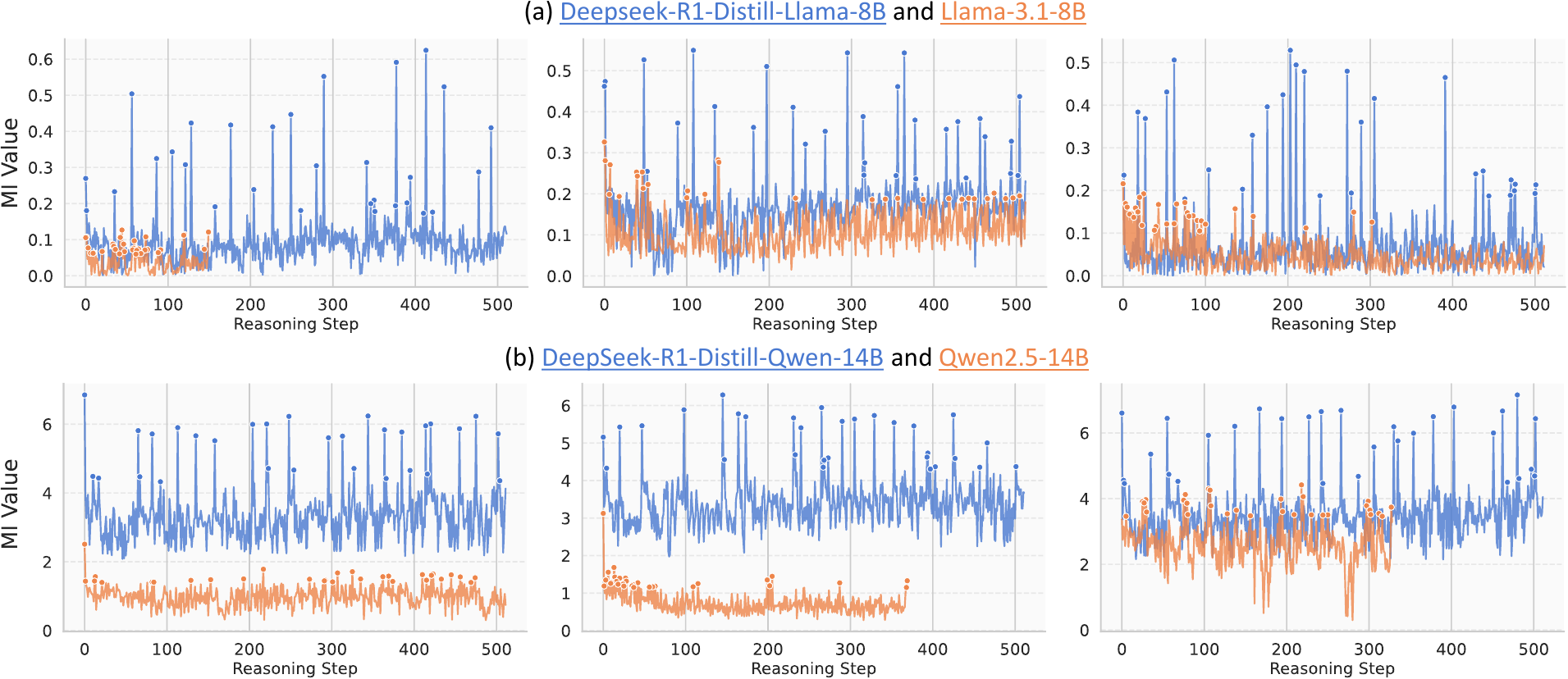}
    \vspace{-6pt}
    \caption{Comparison of MI trajectories between LRMs and their corresponding  non-reasoning~LLMs.}
    \label{fig_mi_peaks_base}
    \vspace{-10pt}
\end{figure}

\textbf{Metrics.} To facilitate a quantitative comparison between LRMs and their corresponding base models in terms of the properties of MI sequence $\{m_t\}_{t=1}^T$ during reasoning, we adopt the following metrics:
(1) \emph{Mean}:  $\bar m = \frac{1}{T}\sum_{i=1}^T m_i$;
(2) \emph{Standard deviation (Std)}:  
     $\sigma_m = \sqrt{\frac{1}{T}\sum_{i=1}^T\bigl(m_i - \bar m\bigr)^2}$;
(3) \emph{AOM}: $\mathrm{AOM}
         = \frac{1}{\lvert\mathcal{O}\rvert}
           \sum_{i\in\mathcal{O}}
           \frac{\lvert m_i - \mathrm{median}(m)\rvert}{\mathrm{IQR}(m)}$, where $\mathcal{O}$ is the set of MI peaks defined in Definition~\ref{defi_mi_peak},  $\mathrm{median}(m)$ is the median of the sequence $\{m_t\}_{t=1}^T$.
Specifically, \emph{Mean} reflects the overall MI magnitude, while the \emph{Std} and \emph{AOM} capture the degree of MI fluctuation.

\textbf{Non-reasoning LLMs exhibit weaker and less pronounced MI peaks compared to LRMs.} As shown in Figure~\ref{fig_mi_peaks_base}, while certain steps in non-reasoning LLMs' reasoning process do exhibit increased MI relative to the average, the increase is generally mild and lacks the sharp spikes observed in their LRM counterparts.
Quantitatively, this observation is further supported by the \emph{Std} and \emph{AOM} metrics reported in Table~\ref{table_lrm_llm}, which consistently indicate lower MI fluctuation and peak intensity in non-reasoning LLMs.
These findings suggest that the MI peak pattern may emerges from complex reasoning enhanced training.

\begin{table*}[t!]
\centering
\caption{
Statistical comparison of MI sequences between LRMs and 
their corresponding non-reasoning LLMs.
}
\label{table_lrm_llm}
\setlength{\tabcolsep}{3pt}
\scalebox{0.88}{
\begin{tabular}{l|cc|cc|cc|cc|cc}
\toprule
\midrule
\multirow{2}{*}{\textbf{Metric}} 
  & \multicolumn{2}{c|}{\textbf{Llama-3.1-8B}} 
  & \multicolumn{2}{c|}{\textbf{Qwen2.5-Math-7B}} 
  & \multicolumn{2}{c|}{\textbf{Qwen2.5-14B}} 
  & \multicolumn{2}{c|}{\textbf{Qwen2.5-32B}} 
  & \multicolumn{2}{c}{\textbf{Llama-3.3-70B-Inst}} \\
\cmidrule(l){2-11}
  & Origin     & Reasoning 
  & Origin     & Reasoning 
  & Origin     & Reasoning 
  & Origin     & Reasoning 
  & Origin     & Reasoning \\ 
\midrule
Mean    & 0.0863 & 0.1279 & 2.1971 & 3.3016 & 1.3128 & 3.3508 & 1.7669 & 4.0352 & 0.0400      & 0.0599      \\ \midrule
Std     & 0.0512 & 0.0707 & 0.8639 & 0.8936 & 0.4326 & 0.6703 & 0.5113 & 0.6036 & 0.0277 & 0.0484 \\ \midrule
AOM     & 3.3573 & 4.5176 & 2.6320 & 2.7541 & 2.6541 & 3.0820 & 2.5466 & 2.5998 & 2.4326 & 3.2866 \\
\bottomrule
\end{tabular}
}
\end{table*}

\textbf{The overall MI in non-reasoning LLMs during the reasoning process is lower than their corresponding LRMs.} Figure~\ref{fig_mi_peaks_base} and the \emph{Mean} metric in Table~\ref{table_lrm_llm} intuitively and quantitatively validate this observation, respectively.
This indicates that after reasoning-intensive training, LRMs seems to fundamentally encode more information relevant to correct reasoning within their representations at each generation step. 
Furthermore, the presence of MI peaks in LRMs could contribute to raising the overall MI throughout the reasoning trajectory. 
These observations provide partial empirical support for the theoretical insights presented in Section~\ref{subsec_theorem}, which indicate that higher MI between representations and the golden answer correlates with a greater likelihood of generating a correct response.

\section{Thinking Tokens are Information Peaks in LLM Reasoning}
In Section~\ref{sec_mi_peaks}, we identify a distinctive phenomenon in LRMs' reasoning trajectories: the emergence of MI peaks. Then a natural follow-up question is: \textit{what semantic information is encoded in the representations at these MI peaks?} In this section, we investigate this question from a token-level perspective. 
Specifically, in Section~\ref{subsec_token_space}, we project the representations at MI peaks into the token space and analyze the characteristics of the corresponding tokens. Then in Section~\ref{subsec_validation_reasoning_tokens}, we design experiments to assess the functional role of these tokens, demonstrating that they are crucial for LRM's reasoning performance, while other tokens have minimal impact.

\subsection{Exploring MI Peak Representations in Token Space}
\label{subsec_token_space}

\begin{figure}[t!]
    \centering
    \includegraphics[width=\linewidth]{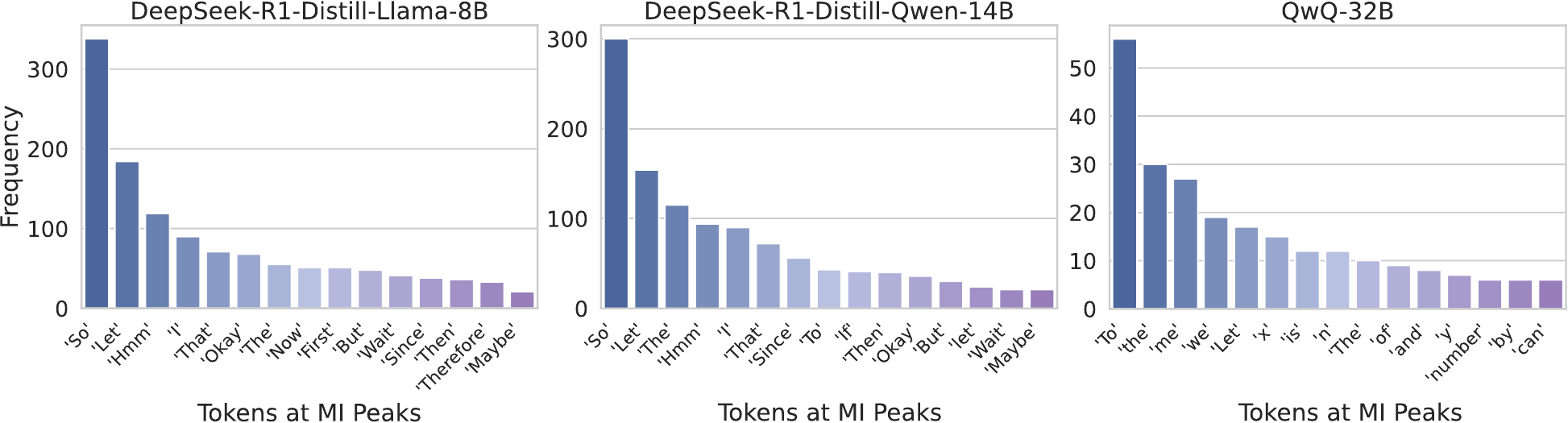}
    \vspace{-16pt}
    \caption{Frequency distribution of tokens at MI peaks.}
    \label{fig_token_freq}
    \vspace{-16pt}
\end{figure}

\textbf{Projecting representations to token space.} 
To interpret the semantics of representations at MI peaks, we decode these specific representations into the token space using LLM's output head~\citep{wang2024grokking,yang-etal-2024-large-language-models,geva-etal-2022-transformer}. Specifically, for a representation \( \bm h_t \), we first compute the corresponding token probability distribution, and then employ a greedy decoding strategy to extract the token with the highest probability:
\begin{equation}
\bm p_t = \mathrm{Softmax}(\bm W_{\text{out}} \bm h_t + \bm b), 
\quad
\hat{z}_t = \arg\max_{i \in \{1, \ldots, V\}} [\bm p_t]_i,
\end{equation}
where \(\bm W_{\text{out}} \in \mathbb{R}^{V \times d} \) is the output projection matrix, \( \bm b \in \mathbb{R}^V \) is the bias vector, and \( V \) is the vocabulary size. 
We apply the above decoding procedure to all representations at MI peaks across the evaluation dataset. In this way, we analyze the empirical distribution over these decoding tokens, uncovering patterns about what types of semantic tokens tend to correspond to these high-MI representations.
Specifically, we use the same models and dataset as described in Section~\ref{subsec_mi_peaks_preli} to conduct experiments. For each model, we aggregate all decoded tokens at MI peaks across the dataset, and then compute their frequency distribution for further analysis.

\textbf{The tokens that appear at MI peaks are mostly connective words that express self-reflection or transitions in LRM's reasoning process.}
In Figure~\ref{fig_token_freq}, we illustrate the top-30 tokens decoded at MI peaks in DeepSeek-R1-Distill-LLaMA-8B, DeepSeek-R1-Distill-Qwen-14B and QwQ\footnote{Results for the other models are provided in Appendix~\ref{appendix_additional_exp}.}.
Interestingly, we observe that the MI peak tokens in LRMs are predominantly logical markers and reflective expressions such as \textit{``So''}, \textit{``Hmm''}, and \textit{``Wait''}, which are commonly associated with pause, thinking, or internal deliberation. 
Intuitively, tokens like \textit{``Hmm"} and \textit{``Wait"} often prompt the model to self-reflect, consider alternative reasoning paths, etc.
For example, we randomly extract responses from LRMs where these tokens appear and observe the follow-up statements: ``Wait, let me think differently. Let's denote...,'' ``Hmm, so I must have made a mistake somewhere. Let me double-check my calculations. First, ...''
This behavior aligns with prior work suggesting that such tokens can motivate to perform multi-step reasoning and improve answer accuracy~\cite{guo2025deepseek}.
We provide more discussions in Appendix~\ref{appendix_discussions}.

\subsection{Tokens at MI Peaks are Critical to LRM's Reasoning Performance}
\label{subsec_validation_reasoning_tokens}

Here, we refer to those decoded high-MI tokens in Section~\ref{subsec_token_space} as \textit{thinking tokens}. These thinking tokens appear to play a dual role: (i) linguistically, they serve as discourse cues that encourage the model to think or reflect; and (ii) in hidden space, their corresponding representations contain high MI with the golden answer. 
Thus, we hypothesize that \textit{these thinking tokens may be critical to model's final reasoning results}. In this subsection, we conduct experiments to
validate this hypothesis.

\textbf{Suppressing the generation of thinking tokens significantly impairs the reasoning performance of LRMs, while suppressing other tokens has minimal effect.}
To investigate the role of thinking tokens identified at MI peaks, we conduct a controlled intervention experiment. Specifically, during inference with LRMs, we suppress the generation of a certain number of thinking tokens by setting their generation probabilities to zero. As a comparison, we randomly suppress the same number of non-thinking token. In this way, we evaluate the model's performance on several math reasoning benchmarks under different numbers of suppression tokens.
As shown in Figure~\ref{fig_validation}, suppressing thinking tokens leads to a significant degradation in the model's reasoning performance, while suppressing non-thinking tokens has little to no effect (more discussions are provided in Appendix~\ref{appendix_discussions}).
This indicates that the thinking tokens indeed play a critical role in LRMs' reasoning capabilities, providing empirical support for our previous hypothesis.

\begin{figure}[t!]
    \centering
    \includegraphics[width=\linewidth]{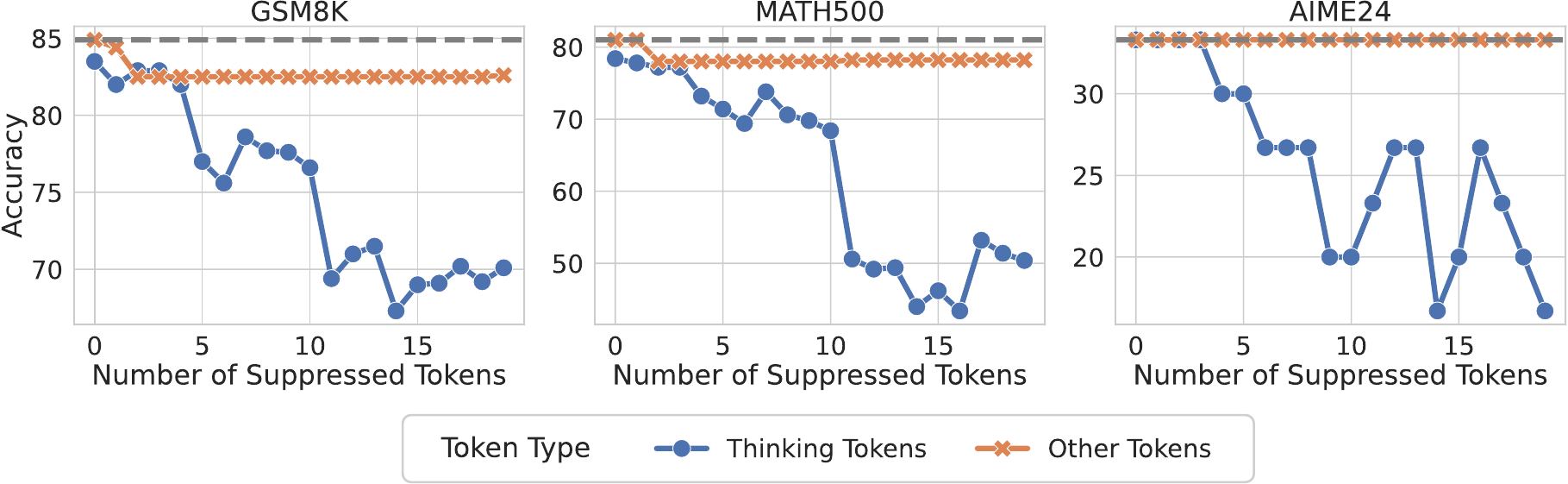}
    \vspace{-16pt}
    \caption{Impact of suppressing the generation of thinking tokens versus other tokens on LRMs' reasoning performance.}
    \label{fig_validation}
    \vspace{-16pt}
\end{figure}

\section{Applications: Leveraging MI Peaks to Improve LRM Reasoning}

Drawing insights from our previous analyses, we propose two simple yet effective techniques to improve LRMs' reasoning performance.
In Section~\ref{subsec_app_reuse}, we introduce a method that reuses internal representations at MI peaks to allow the model to further exploit the information in latent space. 
In Section~\ref{subsec_app_scaleup}, we incorporate the thinking tokens into a test-time scaling scenario to improve model's reasoning accuracy.

\subsection{Recycling High-MI Representations During Inference}
\label{subsec_app_reuse}

The MI Peaks phenomenon analyzed in Section~\ref{subsec_mi_peaks_lrm} suggests that some representations in LRMs' reasoning process may encode particularly useful semantic information for reasoning. 
Motivated by this, we propose a simple technique named Representation Recycling (RR). Intuitively, RR feeds the representations at MI peaks back into the model, thereby allowing the model to process and exploit these representations more thoroughly.

\textbf{Method.} Recall that each layer in an LLM typically consists of a Transformer block~\citep{vaswani2017attention}. Given an input, the forward computation flow through the layers of an LLM follows:
\[
\bm{h}_\ell = \mathrm{TF}_\ell(\bm{h}_{\ell-1}), \quad \ell = 1, \dots, L,
\]
where $\bm h_\ell$ is the output representation of the $l$-th transformer block $\mathrm{TF}_\ell(\cdot)$, and $L$ is the total number of layers.
To encourage deeper processing of a potentially important representation $\bm{h}_{\ell^*}$ at layer $\ell^*$, we modify the forward computation by feeding it back into the same layer once more: $\bm{h}'_{\ell^*} = \mathrm{TF}_{\ell^*}(\bm{h}_{\ell^*}),$ instead of directly passing it to the next layer.
Then, for layers $\ell > \ell^*$, we continue the forward pass as usual: $\bm{h}'_{\ell} = \mathrm{TF}_{\ell}(\bm{h}'_{\ell - 1})$.
In this way, the above ``recycling'' operation allows the model to reprocess the high-MI representations to further extract critical reasoning features.

\begin{wrapfigure}{l}{0.48\textwidth}
\vspace{-16pt}

\begin{center}
\includegraphics[width=\linewidth]{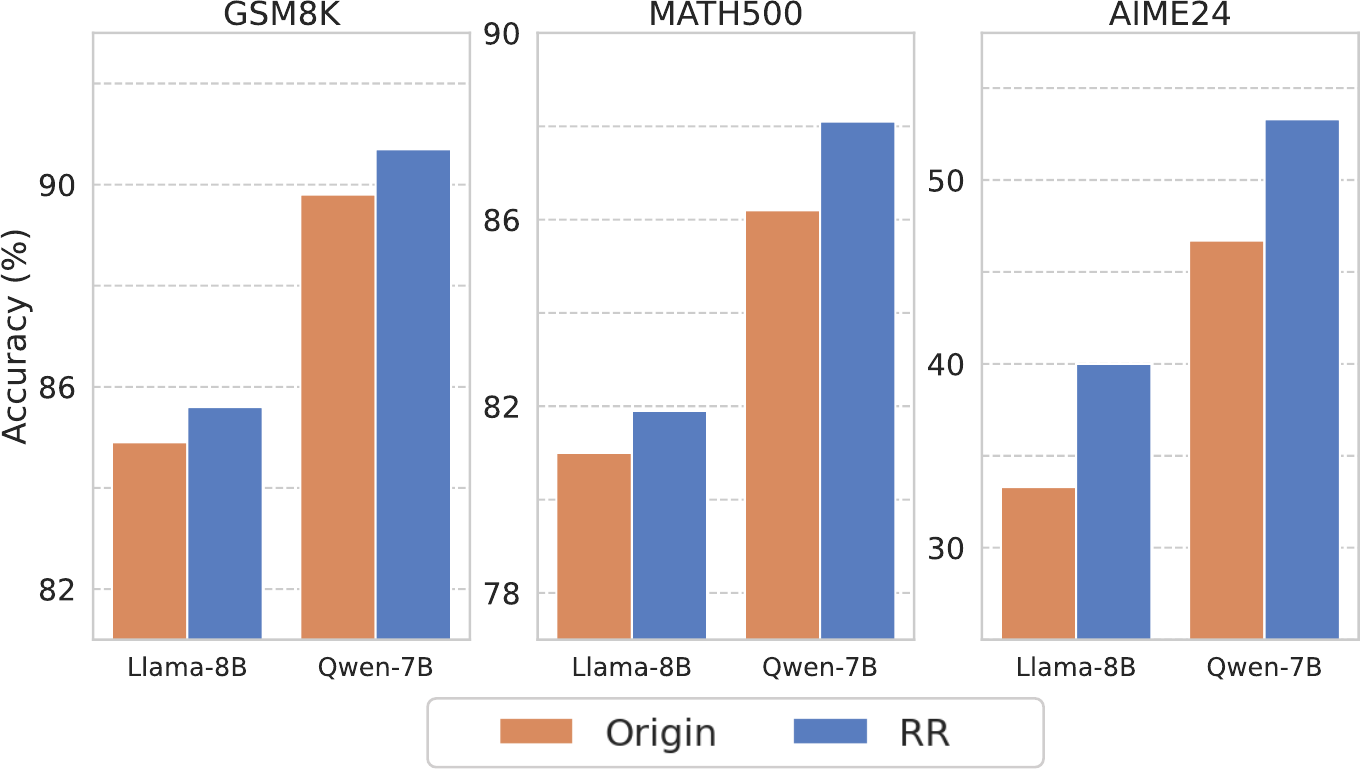}
\end{center}
\vspace{-14pt}
\caption{Reasoning performance of the original LRMs and our RR method across multiple math benchmarks.}
\label{fig_loop}
\vspace{-10pt}
\end{wrapfigure}

\textbf{Experimental setup.} To evaluate RR's effectiveness, we conduct experiments on three mathematical reasoning benchmarks using DeepSeek-R1-Distill-Llama-8B and DeepSeek-R1-Distill-Qwen-7B.
Since ground-truth answers are unavailable during inference, we first record the thinking tokens using the training set of MATH dataset (as introduced in Section~\ref{subsec_token_space}), and then trigger RR whenever the model generates one of these thinking tokens. We empirically set \( \ell^* \) to middle or high layers of the LLMs, since previous studies suggest that these layers tend to encode more semantically rich content~\citep{burns2022discovering,zou2023representation,qian2024towards}.

\textbf{Results.} 
As shown in Figure~\ref{fig_loop}, \textbf{RR consistently improves LRMs' reasoning performance across all benchmarks}. 
In particular, RR yields a notable performance improvement on the AIME24 dataset, which consists of challenging competition-level problems. 
This suggests that recycling the MI-peak representations could help LRMs further unlock and leverage their inherent reasoning potential, leading to better reasoning performance.

\subsection{Test-Time Scaling with Thinking Tokens}
\label{subsec_app_scaleup}
With the diminishing returns of scaling laws in LLMs' training stage, test-time scaling is becoming an increasingly important paradigm for improving the reasoning performance of LRMs~\citep{gan2025rethinking, snell2024scaling,welleck2024decoding}.
Prior studies have shown that LLMs' reasoning performance can continue to improve as more compute is allocated at inference time~\citep{jaech2024openai}. 
Inspired by prior work~\citep{muennighoff2025s1}, we propose a simple yet effective strategy called Thinking Token based Test-time Scaling (TTTS).

\textbf{Method.} Given the set of thinking tokens identified in Section~\ref{subsec_token_space}, we filter out tokens with little semantic content (\eg punctuations and single characters, see Appendix~\ref{appendix_exp_details} for more details) and retain tokens like ``So,'' ``Hmm,'' which often indicate reflection, transition, or further thinking. Then during inference, we append one of these thinking tokens to the end of the model’s initial output and allow it to continue generating additional reasoning steps. 

\textbf{Experimental setup.} We evaluate TTTS using LLaMA-8B on GSM8K, MATH500, and AIME24. Specifically, we consider a controlled test-time scaling setting: given a LRM with an initial token budget, we gradually increase the token generation budget and compare the model's reasoning performance with and without TTTS.

\textbf{Results.} As shown in Figure~\ref{fig_scaleup}, \textbf{under the same token budget, TTTS consistently outperforms the original LRM on both GSM8K and MATH500.} Notably, on GSM8K, the original LRM’s performance plateaus once the token budget exceeds 1024, whereas \textbf{TTTS continues to yield performance improvements as the token budget increases}. On the harder AIME24 benchmark, we observe that the original model's performance saturates once the token budget reaches around 3000. In contrast, although TTTS underperforms slightly at some intermediate token budgets, its performance continues to improve steadily and eventually surpasses the original model once the budget exceeds 6144 tokens.
These results suggest that as more inference-time resources become available, TTTS could effectively prompt LRMs to further think, and stably improve the model's reasoning performance.

\begin{figure}
    \centering
    \includegraphics[width=\linewidth]{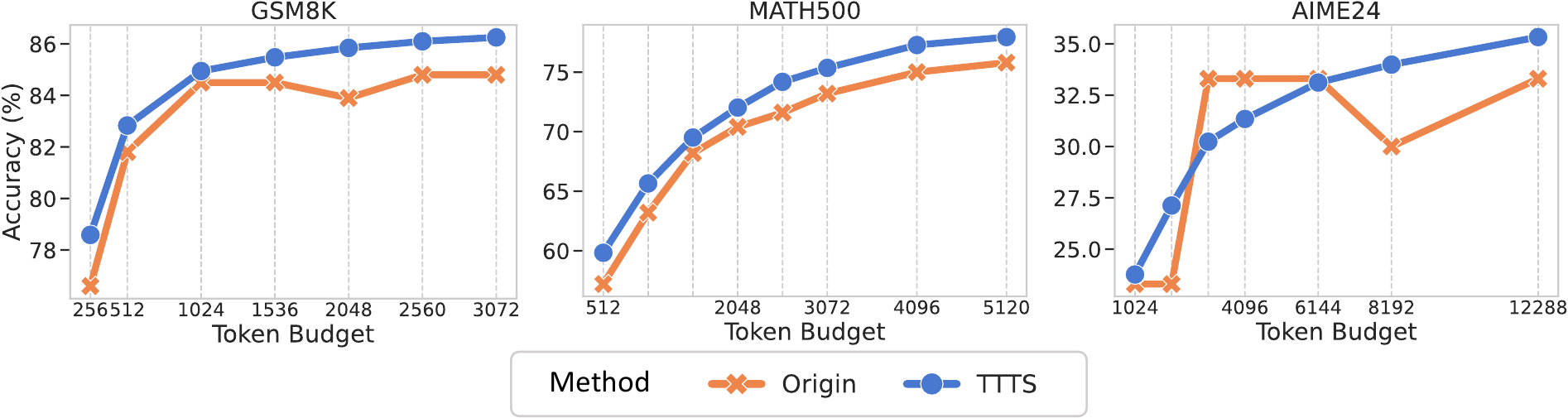}
    \vspace{-16pt}
    \caption{Reasoning performance of TTTS and the original LRMs across multiple math benchmarks under varying token budgets.}
    \label{fig_scaleup}
    \vspace{-12pt}
\end{figure}

\section{Related work}
\label{sec_related_work}

\textbf{Reasoning in LLMs.} LLMs have achieved significant advancements in understanding, particularly for complex reasoning tasks \cite{wei2022chain,lightman2024lets,team2025kimi,yao2025unveiling}.  
The development of multi-step reasoning frameworks began with the chain-of-thought (CoT) paradigm \cite{wei2022chain}, which introduces structured prompting to formalize explicit intermediate reasoning steps. Surprisingly, this principle is further simplified by \cite{kojima2022large}, where the authors demonstrate that minimalist prompts (e.g., ``Let us think step by step'') could achieve comparable performance. Authors in \cite{zhou2022least} systematize problem decomposition via least-to-most prompting hierarchies. This trajectory culminated in \cite{yao2023tree} formalizing reasoning as tree-structured search processes, enabling backtracking and strategic exploration through explicit state-space modeling. Refinement Strategies also address practical limitations. Wang et al. \cite{wang2022self}  introduced self-consistency voting to mitigate output instability.

\textbf{Information Theory in LLMs.}  Information theory \cite{cover1999elements}  provides valuable theoretical basis for analyzing the behavior of language models~\cite{jeon2024information,dang2024explainable,pan2025understanding}, with applications spanning numerous fields: reasoning process diagnostics through quantification of unsupervised information gain \cite{ton2024understanding}, model optimization via information bottleneck distillation \cite{chen2024learning}, systematic behavior analysis capturing dependency laws \cite{chen20252} and error propagation dynamics \cite{gan2025rethinking}. Recent extensions formalize synthetic data generation through reverse-bottleneck metrics \cite{gan2024towards}, demonstrating information theory's versatility in bridging theoretical insights with engineering practices.
Ren and Liu~\cite{ren2025revisiting} show that Transformers exhibit an inductive bias toward lower‐entropy representations when approximating target distributions.

\textbf{Critical Tokens in LLMs.} Prior work has shown that a small set of ``critical tokens'' can disproportionately affect an LLM’s behavior, prompting methods to identify them~\cite{lin2024critical}, quantify their influence~\cite{goldshmidt2024tokenshap, abbasi2024normxlogit}, and mitigate their impact via selective training or pruning~\cite{lin2024rho, tao2025saliency}.
Recent advances in LLM safety alignment have increasingly focused on the pivotal role of potential critical tokens. 
Zou et al.~\cite{zou2023universal} propose a method to craft universal adversarial suffixes that induce aligned LLMs to generate inappropriate content.
Lin et al.~\cite{lin2024the} find that after alignment, tokens like ``sorry,'' ``however,'' and ``apolog'' are learned by the model to prevent generating harmful outputs.
Qi et al.~\cite{qi2025safety} show that simply forcing an unaligned LLM to begin its responses with certain safe tokens can significantly improve the model’s safety.

\section{Conclusion}
\label{sec_conclusion}

In this work, we systematically investigate the reasoning mechanisms of LRMs through an information-theoretic perspective. By tracking the MI evolution between intermediate representations and the golden answer, we unveil an interesting \textit{MI peaks} phenomenon. Further, we find that these MI peaks predominantly correspond to \textit{thinking tokens} (e.g., ``Hmm,'' ``Wait,'' ``Therefore'') that express self-reflection, logical transitions, or self-correction. 
Theoretically, we show that higher cumulative MI correlates with tighter bounds on model error, offering insights to the MI peaks phenomenon. 
Building on these analyzes, we introduce two simple, training-free methods—Representation Recycling (RR) and Thinking Token based Test-time Scaling (TTTS)—that effectively improve LRMs' reasoning performance. We hope our analyze could shed new light on the internal structure of LRM reasoning and open up new directions for inference-time reasoning enhancement.

\bibliographystyle{plain}
\bibliography{references}



\appendix
\newpage
\tableofcontents

\setcounter{equation}{0}
\setcounter{theorem}{0}
\setcounter{assumption}{0}

\newpage
\section{Proofs and Definitions}
\label{appendix_proofs}

\subsection{Proof of Theorem~\ref{thm_lower}}

\begin{theorem}
\label{thm_lower}
Consider a sequence of representations $\bm h_1, \bm h_2, \dots, \bm h_T$ during an LLM's reasoning process, where $T$ denotes the number of total reasoning steps. Let $y$, $\hat{y}$ denote the golden answer and the LLM's prediction answer, respectively.
Define $p_e = \Pr (\hat{y} \neq y)$ as the LLM's prediction error probability.
Then the following inequality holds:
\begin{equation}
\label{eq-thm1}
p_e \;\ge\;\frac{1}{\log\bigl(|\mathcal{Y}|-1\bigr)}\Bigl[
H(y)
\;-\;
\sum_{j=1}^{T} I\bigl(y;\,\bm h_j \mid \bm h_{<j}\bigr)
\;-\;
H_b(p_e)
\Bigr],
\end{equation}
where $|\mathcal{Y}|$ is the size of the support of $y$, and $H_b(p_e)$ denote the binary entropy of $p_e$ that defined by
\begin{equation}
    \label{eq-binary-entropy}
    H_b(p_e) = -p_e \log p_e - (1-p_e)\log(1-p_e).
\end{equation}
\end{theorem}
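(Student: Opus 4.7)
The plan is to derive the bound as a direct consequence of Fano's inequality applied to the full trajectory of representations, combined with the chain rule for mutual information. Fano's inequality states that for any estimator $\hat y$ of a discrete random variable $y$ with support $\mathcal{Y}$, observed through some side information $X$, with error probability $p_e = \Pr(\hat y \neq y)$, one has
\begin{equation*}
H(y \mid X) \;\le\; H_b(p_e) + p_e \log(|\mathcal{Y}|-1).
\end{equation*}
This is the standard inequality from which the theorem will follow after a suitable choice of $X$ and a rearrangement.

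Concretely, I would take $X = (\bm h_1, \bm h_2, \dots, \bm h_T)$, the full sequence of hidden representations generated along the reasoning trajectory. Since the predicted answer $\hat y$ is produced deterministically (or at worst stochastically) from these internal states via the model's decoding head, the Markov chain $y \to (\bm h_1,\dots,\bm h_T) \to \hat y$ holds, which is what legitimizes substituting this $X$ into Fano's bound. Next I would use $H(y \mid X) = H(y) - I(y; X)$ together with the chain rule
\begin{equation*}
I\bigl(y;\,\bm h_1,\dots,\bm h_T\bigr) \;=\; \sum_{j=1}^{T} I\bigl(y;\,\bm h_j \mid \bm h_{<j}\bigr),
\end{equation*}
to rewrite the conditional entropy as $H(y) - \sum_{j=1}^T I(y; \bm h_j \mid \bm h_{<j})$. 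Plugging this into Fano's inequality and isolating $p_e$ on the left-hand side, while dividing through by $\log(|\mathcal{Y}|-1)$, yields the claimed lower bound verbatim.

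I do not expect any deep obstacle here; the argument is essentially two-line once the setup is in place. The only step deserving a sentence of care is justifying the Markov condition $y \to (\bm h_1,\dots,\bm h_T) \to \hat y$, which relies on the fact that the LLM's output is a function of its internal trajectory; this is what licenses the use of Fano and the data-processing viewpoint. A minor regularity issue is that the bound is only meaningful when $|\mathcal{Y}| \ge 2$ so that the denominator $\log(|\mathcal{Y}|-1)$ is positive, and one should also note that the bound can become vacuous (negative right-hand side) when the cumulative mutual information is large, which is consistent with the intended interpretation that accumulating MI along the trajectory drives the minimal achievable error down.
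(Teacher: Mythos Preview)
Your proposal is correct and matches the paper's own proof essentially step for step: the paper also establishes Fano's inequality for $H(y\mid\hat y)$, invokes the data-processing inequality (your Markov-chain remark) to pass from $I(y;\hat y)$ to $I(y;\bm h_{1:T})$, and then applies the chain rule for mutual information before rearranging. The only cosmetic difference is that the paper re-derives Fano from the error indicator $E=\mathbf{1}\{\hat y\neq y\}$ rather than citing it as known; your side remark about the denominator should read $|\mathcal{Y}|\ge 3$ (so that $\log(|\mathcal{Y}|-1)>0$), but this does not affect the argument.
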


\begin{proof}

We first define an indicator random variable 
$E = \mathbf{1}\{\hat y \neq y\},$
where $E = 1$ if $\hat y \neq y$, and $E = 0$ otherwise.

By the chain rule of entropy, we have:
\begin{align}
    H(y\mid \hat y) = &  H(E\mid \hat y) + H(y\mid \hat{y}, E) \nonumber \\
    = &  H(E\mid \hat y) + H(y\mid \hat{y}, E=0) \Pr(E=0) + H(y\mid \hat{y}, E=1) \Pr(E=1).
\end{align}
Since $E = 0$ indicates $\hat y = y$, we have $H(y\mid \hat{y}, E=0) = 0$. And for $H(E\mid \hat y)$, we have:
\begin{equation}
    H(E\mid \hat y) \leq H(E) := H_b(p_e).
\end{equation}
Thus, we can derive:
\begin{align}
    H(y\mid \hat y)
    \leq H_b(p_e) + p_e H(y\mid \hat{y}, E=1).
\end{align}

Since $E=1$ indicates $\hat{y} \neq y$, the random variable $y$ can take at most $|\mathcal{Y}|-1$ values given $\hat y$ as condition. 
Hence, we have~\citep{gan2025rethinking}:
\begin{equation}
    \label{eq_fano}
    H(y\mid \hat y) \leq H_b(p_e) + p_e \log\bigl(|\mathcal{Y}|-1\bigr).
\end{equation}

Based on the definition of mutual information, we have:
\begin{equation}
    \label{eq_defi_mi_entropy}
    I(y; \hat{y}) = H(y) - H(y \mid \hat{y}).
\end{equation}

Combining Eq.~(\ref{eq_fano}) and Eq.~(\ref{eq_defi_mi_entropy}) derives:

\begin{equation}
    \label{eq_fano_pe}
    p_e \geq \frac{1}{\log\bigl(|\mathcal{Y}|-1\bigr)}
    \Bigl[H(y) - I(y; \hat{y}) - H_b(p_e)\Bigr].
\end{equation}

Consider an LLM's reasoning process, given the intermediate representations $\bm h_{1:T} = (\bm h_1, \bm h_2, \dots, \bm h_T)$, the output $\hat{y}$ is computed as a function of these representations $\hat{y} = f(\bm h_{1:T})$. Thus, based on the Data Processing Inequality (DPI), we have:
\begin{equation}
\label{eq_dpi}
I(y;\hat{y}) \leq I(y;\bm h_{1:T}).    
\end{equation}

Combining Eq.~(\ref{eq_fano_pe}) and Eq.~(\ref{eq_dpi}), and applying the chain rule of mutual information, we have:
\begin{equation}
    p_e \geq \frac{1}{\log\bigl(|\mathcal{Y}|-1\bigr)}
    \Bigl[H(y) - \sum_{j=1}^{T} I\bigl(y;\,\bm h_j \mid \bm h_{<j}\bigr) - H_b(p_e)\Bigr],
\end{equation}
which completes the proof.
\end{proof}

\subsection{Proof of Theorem~\ref{thm_upper}}

\begin{theorem}
\label{thm_upper}
Following the notations in Theorem~\ref{thm_lower}, the following inequality holds:
\begin{equation}
\label{eq-thm2}
p_e \;\le\;\frac{1}{2}\Bigl[
H(y)
\;-\;
\sum_{j=1}^{T} I\bigl(y;\,\bm h_j \mid \bm h_{<j}\bigr)
\Bigr].
\end{equation}
\end{theorem}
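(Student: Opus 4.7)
The plan is to reduce Theorem~\ref{thm_upper} to the classical Tebbe--Dwyer--Kovalevsky upper bound on the Bayes error, $p_e \le \tfrac{1}{2}\,H(y\mid \bm h_{1:T})$ with base-$2$ logarithms. Step one mirrors the closing manipulation in the proof of Theorem~\ref{thm_lower}: by the chain rule of mutual information, $\sum_{j=1}^{T} I(y;\bm h_j\mid \bm h_{<j}) = I(y;\bm h_{1:T})$, so the bracketed quantity on the right-hand side of \eqref{eq-thm2} equals $H(y\mid \bm h_{1:T})$. The target inequality therefore collapses to $p_e \le \tfrac{1}{2}\,H(y\mid \bm h_{1:T})$ once $\hat y$ is read as the MAP/greedy decoder built from $\bm h_{1:T}$, so that $p_e$ coincides with the Bayes error.

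Step two is a pointwise reduction in the realization $\bm h_{1:T} = \bm h$. Writing $p_{\max}(\bm h) := \max_{y} p(y\mid\bm h)$ and $H(\bm h) := H(y\mid \bm h_{1:T}=\bm h)$, the MAP error conditional on $\bm h$ equals $1-p_{\max}(\bm h)$. Since $p_e = \mathbb{E}_{\bm h}\bigl[1-p_{\max}(\bm h)\bigr]$ and $H(y\mid \bm h_{1:T}) = \mathbb{E}_{\bm h}\bigl[H(\bm h)\bigr]$, it suffices to prove the deterministic inequality $H(\bm h)\ge 2\bigl(1-p_{\max}(\bm h)\bigr)$ for every finite-support conditional distribution and then integrate in $\bm h$.

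Step three establishes this pointwise bound by a short case split on $p_{\max}$. If $p_{\max}\ge \tfrac12$, grouping the MAP atom against its complement yields $H(\bm h)\ge H_b(p_{\max})$, and the scalar inequality $H_b(q)\ge 2(1-q)$ on $[\tfrac12,1]$ follows from concavity of $H_b$ together with equality at the endpoints $q=\tfrac12$ and $q=1$. If $p_{\max}<\tfrac12$, the crude bound $H(\bm h)\ge -\log p_{\max}$ already suffices, because $-\log p\ge 2(1-p)$ holds on $(0,\tfrac12]$: the function $g(p):=-\log p + 2p - 2$ vanishes at $p=\tfrac12$, while its derivative has a unique zero at $p^\star = 1/(2\ln 2)>\tfrac12$, so $g$ is strictly decreasing on $(0,\tfrac12]$ and therefore nonnegative there.

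The main obstacle is precisely this second case: the clean $H_b$ comparison no longer applies and one must handle the base-dependent scalar inequality $-\log p\ge 2(1-p)$ via the brief calculus check above. A secondary subtlety worth flagging is that the theorem as stated does not explicitly designate $\hat y$ as the MAP decoder, whereas the Tebbe--Dwyer--Kovalevsky route genuinely controls only the Bayes error $\mathbb{E}[1-p_{\max}(\bm h_{1:T})]$; a rigorous write-up should either impose this reading on $\hat y$ or replace $p_e$ accordingly, since for adversarial decoders (e.g.\ one that always outputs a non-modal value) the inequality can fail outright.
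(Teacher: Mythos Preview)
Your proposal is correct and shares the paper's high-level scaffold: both reduce the statement to the pointwise Tebbe--Dwyer/Kovalevsky inequality $1-p_{\max}\le \tfrac12 H(p_1,\dots,p_m)$ for a finite conditional distribution, take expectation over $\bm h_{1:T}$, and then unfold $H(y\mid \bm h_{1:T})$ via the chain rule of mutual information. You also correctly flag that $\hat y$ must be read as the MAP decoder for the bound to hold; the paper makes the same implicit identification when it sets $p_e = 1-\max_i \Pr(y=i\mid \bm h_{1:T})$ without further comment.

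The genuine difference is in how the pointwise inequality is established. The paper proceeds by induction on the number of classes~$m$: it verifies the binary case directly, then at the inductive step merges the top two masses $p_m, p_{m+1}$, applies the hypothesis to the merged $(m)$-class distribution, and recovers the full entropy via the grouping axiom $H(p_1,\dots,p_{m+1}) = H(p_1,\dots,p_m+p_{m+1}) + (p_m+p_{m+1})\,H_b\!\bigl(\tfrac{p_m}{p_m+p_{m+1}}\bigr)$. Your route instead splits on whether $p_{\max}\gtrless \tfrac12$: in the first case you coarsen to the binary distribution $(p_{\max},1-p_{\max})$ and use concavity of $H_b$ against the chord; in the second you invoke the min-entropy bound $H\ge -\log p_{\max}$ and check the scalar inequality $-\log p\ge 2(1-p)$ on $(0,\tfrac12]$ by calculus. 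The paper's induction is slightly cleaner because it never needs the min-entropy detour or the base-dependent derivative analysis, while your argument avoids induction altogether and makes the role of concavity more explicit. Both are standard proofs of the same classical lemma, and either would be acceptable here.
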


\begin{proof}

The output of a reasoning model $\hat{y}$ can be formulated as a multi-class classification task with predicted probabilities $p_i = \Pr(\hat{y}=i \mid \bm h_{1:T})$. According to Bayesian decision theory\cite{berger2013statistical} \cite{zhou2021machine}, the conditional error probability is given by:
\begin{equation}
    \label{eq-bayes}
    p_e = 1 - \max_{i} \{\Pr(y=i \mid \bm h_{1:T})\}.
\end{equation}
For binary classification ($|\mathcal{Y}|=2$), we have:
\begin{equation}
    \label{eq-two}
min\{p,1-p\} \leq \frac{1}{2}[-p\log{p}-(1-p)\log{(1-p)}].
\end{equation}
Then take an expectation over $p$:
\begin{equation}
    \label{eq-expect}
p_e = \mathbb{E}_p[min\{p,1-p\}] \leq \frac{1}{2}\mathbb{E}_p[-p\log{p}-(1-p)\log{(1-p)}].
\end{equation}
So we derive:
\begin{equation}
    \label{eq-pe}
p_e \leq \frac{1}{2} \mathbb{E}_{h_{1:T}}[H(y\mid \bm h_{1:T})] =  \frac{1}{2} H(y\mid \bm h_{1:T}).
\end{equation}

This extends to multiclass problems through a recursive application (see Eq.~(\ref{eq-lemma})).

We prove the following inequality by mathematical induction that for any $m$-class discrete probability distribution $\{p_1,\dots,p_m\}$:
\begin{equation}
    \label{eq-lemma}
    p_e=1 - \max_{i}\{p_i\} \leq \frac{1}{2}H(p_1,\dots,p_m).
\end{equation}

\textit{Base case} ($m=2$): Direct verification using binary entropy function Eq.~(\ref{eq-two}).

\textit{Inductive step}: Assume validity for $m$ classes. For $m+1$ classes, assume without loss of generality $p_{m+1} = \max_i\{p_i\}$. Consider the merged distribution $\{p_1,\dots,p_{m-1},p_m+p_{m+1}\}$ and apply:

1. The induction hypothesis:
\begin{equation}
    \label{eq-final1}
1 - (p_m+p_{m+1}) \leq \frac{1}{2}H(p_1,\dots,p_{m-1},p_m+p_{m+1}).
\end{equation}
2. The grouping axiom \cite{ash2012information}:
\begin{equation}
    \label{eq-final2}
H(p_1,\dots,p_{m+1}) = H(p_1,\dots,p_m+p_{m+1}) + (p_m+p_{m+1})H\left(\frac{p_m}{p_m+p_{m+1}},\frac{p_{m+1}}{p_m+p_{m+1}}\right).
\end{equation}
3. Binary entropy bound for the final term:
\begin{equation}
    \label{eq-final3}
1-\frac{p_{m+1}}{p_m+p_{m+1}} \leq \frac{1}{2}H\left(\frac{p_m}{p_m+p_{m+1}},\frac{p_{m+1}}{p_m+p_{m+1}}\right).
\end{equation}
Combining Eq.~(\ref{eq-final1}), Eq.~(\ref{eq-final2}) and Eq.~(\ref{eq-final3}) completes the induction:
\begin{align*}
    \label{eq-final}
\frac{1}{2} H(p_1,\dots,p_{m+1}) &= \frac{1}{2}H(p_1,\dots,p_m+p_{m+1}) + \frac{1}{2}(p_m+p_{m+1})H\left(\frac{p_m}{p_m+p_{m+1}},\frac{p_{m+1}}{p_m+p_{m+1}}\right) \\
&\geq 1 - (p_m+p_{m+1}) + (p_m+p_{m+1})(1-\frac{p_{m+1}}{p_m+p_{m+1}})\\
&=1-p_{m+1} \\
&=1 - \max_{i}\{p_i\}.
\end{align*}
Thus, we have proved the Eq.~(\ref{eq-lemma}). 

Taking expectation over $h_{1:T}$ in Eq.~(\ref{eq-bayes}) and applying the Eq.~\eqref{eq-lemma}, we have
\begin{align*}
    p_e &= \mathbb{E}_{h_{1:T}}[1 - \max_{i} \{\Pr(y=i|h_{1:T})\}]. \\
    &\leq \frac{1}{2}\mathbb{E}_{h_{1:T}}[H(y|h_{1:T})] \\
    &= \frac{1}{2}H(y|h_{1:T}) \\
    &= \frac{1}{2}\left[H(y) - \sum_{j=1}^{T} I(y; h_j \mid h_{<j})\right],
\end{align*}
which completes the proof.
\end{proof}

\subsection{Definitions}

\begin{definition}[Mutual Information~\citep{ash2012information,kraskov2004estimating}]
\label{def_mi}
Given two continuous random variables X and Y , the mutual information is defined as:
\begin{equation}
    I(X ; Y)=\int_Y \int_X p(x, y) \log \frac{p(x, y)}{p(x) p(y)} d x d y,
\end{equation}
where $p(x,y)$ denotes the joint probability density function of $X$ and $Y$; $p(x)$, $p(y)$ denotes the marginal probability density functions of $X$ and $Y$, respectively. 
\end{definition}

\begin{definition}[Hilbert-Schmidt Independence Criterion (HSIC) \citep{gretton2005measuring}]
\label{defi_hsic}
$\operatorname{HSIC}$ is the Hilbert-Schmidt norm of the cross-covariance operator between the distributions in Reproducing Kernel Hilbert Space (RKHS). Formally:
\begin{equation}
    \begin{aligned}
        \operatorname{HSIC}(X, Y) =& \mathbb{E}_{X Y X^{\prime} Y^{\prime}}
        \left[k_X\left(X, X^{\prime}\right) k_Y\left(Y, Y^{\prime}\right)\right]  
        + \mathbb{E}_{X X^{\prime}}\left[k_X\left(X, X^{\prime}\right)\right] 
        \mathbb{E}_{Y Y^{\prime}}\left[k_Y\left(Y, Y^{\prime}\right)\right]  \\
        -& 2 \mathbb{E}_{X Y}\left[\mathbb{E}_{X^{\prime}}\left[k_X\left(X, X^{\prime}\right)\right] \mathbb{E}_{Y^{\prime}}\left[k_Y\left(Y, Y^{\prime}\right)\right]\right], 
    \end{aligned}
\end{equation}
where $X^{\prime}$, $Y^{\prime}$ are independent copies of $X$, $Y$, respectively, and $k_X$ , $k_Y$ are kernel functions.
\end{definition}

\section{Experimental Implementation Details}
\label{appendix_exp_details}

\textbf{Practical implementation of HSIC.}
Due to the difficulty of accurately computing MI in high-dimensional spaces \citep{kraskov2004estimating, poole2019variational,gan2025rethinking}, we employ the HSIC to estimate MI. Following \cite{ma2020hsic,qian2024towards,gan2025rethinking}, the empirical HSIC from Definition~\ref{defi_hsic} is computed as
\begin{align}
    \operatorname{HSIC}(X, Y)
    &= \frac{1}{(n-1)^2}\,\mathrm{tr}\bigl(K_X \,H\, K_Y \,H\bigr),
\end{align}
where $K_X$ and $K_Y$ are kernel matrices with entries
\[
K_{X_{ij}} = k_X(x_i, x_j),
\quad
K_{Y_{ij}} = k_Y(y_i, y_j),
\]
and 
$
H = I - \frac{1}{n}\,\mathbf{1}\,\mathbf{1}^\top
$
is the centering matrix. Consistent with \cite{ma2020hsic,qian2024towards,gan2025rethinking}, we adopt the Gaussian kernel to implement the kernel:
\begin{align}
    k(\mathbf{x}, \mathbf{y})
    &= \exp\!\Bigl(-\frac{\|\mathbf{x}-\mathbf{y}\|^2}{2\sigma^2}\Bigr),
\end{align}
where the bandwidth $\sigma$ is selected by grid search over the range $[50,\,400]$.

\textbf{Datasets.}
\textit{1) Evaluation of LRMs' reasoning performance.} 
We select three widely-used math reasoning benchmarks to evaluate the reasoning capabilities of LRMs, ordering from easy to hard: GSM8K~\citep{cobbe2021training}, MATH500~\citep{lightman2024lets}, and AIME24~\citep{aime_wiki}. We adopt the evaluation framework provided by Qwen2.5-Math~\citep{yang2024qwen2}. To ensure the reproducibility of our results, we fix the temperature to 0 in all experiments.
\textit{2) Observing the MI trajectories during LRMs' reasoning process.} 
We use the training set of the MATH dataset~\citep{hendrycks2021measuring}. Specifically, we randomly sample 100 instances to compute MI along the reasoning trajectories.

\textbf{Models.}
We conduct experiments on DeepSeek’s R1 series models~\citep{guo2025deepseek} and QwQ-32B~\citep{qwq32b}. For DeepSeek’s R1 series models, we pair each LRM with its corresponding non-reasoning LLM counterpart as follows: DeepSeek-R1-Distill-Qwen-7B and Qwen2.5-Math-7B~\citep{yang2024qwen2}, 
DeepSeek-R1-Distill-Llama-8B and Llama-3.1-8B~\citep{grattafiori2024llama},
DeepSeek-R1-Distill-Qwen-14B and Qwen2.5-14B~\citep{qwen2.5},
DeepSeek-R1-Distill-Qwen-32B and Qwen2.5-32B~\citep{qwen2.5},
DeepSeek-R1-Distill-Llama-70B and Llama-3.3-70B-Instruct~\citep{grattafiori2024llama}.
As observed, all LRMs in the R1 series are trained from foundation LLMs, except for DeepSeek-R1-Distill-Qwen-7B, which is trained from a math-specialized LLM. As for QwQ-32B, existing public report~\citep{qwq32b} has not disclosed which specific LLM it was trained from. All experiments are conducted on four NVIDIA A100 GPUs.

\textbf{More implementation details.}
For all experiments involving MI computation, we extract the representation from the \textit{last layer} of the model. We concentrate on the \textit{last layer} since higher layers have been shown to encode more semantic content \citep{zou2023representation, rimsky-etal-2024-steering} and the \textit{last layer} directly influence the model’s output text~\citep{qian2024dean}.
For TTTS in Section 4.2, to ensure that the model begins continuation with semantically meaningful tokens, we filter out tokens with little semantic information, such as punctuation, single characters, etc. In this way, the resulting token list is:
\texttt{[So, Let, Hmm, I, Okay, First, Wait, But, Now, Then, Since, Therefore, If, Maybe, To]}.
All experiments are conducted on four NVIDIA A100 GPUs.

\section{Discussions}
\label{appendix_discussions}

\textbf{Limitations.}
This work has several limitations. 
First, we analyze the MI dynamics of LRMs at the token level. Alternative granularities such as dividing reasoning steps by semantic units or logical steps may reveal additional insights.
Second, while we observe the interesting MI peaks phenomenon and provide insights into the reasoning mechanisms of LRMs, the underlying mechanisms that give rise to these peaks remain underexplored. We leave a deeper analysis of their origin to future work. 
We hope that our work will inspire further research along these directions and contribute to a deeper understanding of the reasoning process in LRMs.

\textbf{Broader impacts.}
This work contributes to a deeper understanding of the reasoning mechanisms in LRMs. We first observe the MI peaks phenomenon during LRMs' reasoning process, and then propose two simple training-free methods to enhance LRMs' reasoning performance based on the findings. These analyzes may have positive impacts by making AI systems more transparent and effective. 
However, there are also potential risks. If used carelessly, the same methods could be applied to manipulate outputs or reinforce biased thinking patterns. It is important to consider these concerns when applying our techniques and to encourage responsible use through further study and monitoring.

\textbf{Discussion on Tokens at MI Peaks.}  
As shown in Figure~4 in the main text and Figure~\ref{fig_word_freq_bar_appendix} in the appendix, different LRMs exhibit slightly different token frequency patterns at MI peaks. 
For models trained from foundation LLMs, \ie DeepSeek-R1-Distill-Llama-8B, DeepSeek-R1-Distill-Qwen-14B, DeepSeek-R1-Distill-Qwen-32B, and DeepSeek-R1-Distill-LLaMA-70B, the frequently occurring tokens include \texttt{So}, \texttt{Let}, \texttt{Hmm}, \texttt{The}, and \texttt{Okay}. And for DeepSeek-R1-Distill-Qwen-7B, which is trained from a math-specialized LLM, tokens such as \texttt{So}, \texttt{The}, \texttt{Let}, \texttt{To}, \texttt{and}, and \texttt{Since} are more prominent. For QwQ-32B, tokens like \texttt{To}, \texttt{the}, \texttt{we}, and \texttt{Let} appear more frequently.
Semantically, these tokens commonly express reasoning-related functions such as initiating thinking (\texttt{So}, \texttt{Hmm}), logical transition (\texttt{Since}, \texttt{Therefore}), or discourse structuring (\texttt{Let}, \texttt{Then}, \texttt{To}), which likely help facilitate the model's continued reasoning.
We hypothesize that the distribution of tokens at MI peaks may be influenced by factors such as the nature of the foundation LLM, the reasoning-intensive training paradigm, etc. We leave a deeper investigation of the relationship among MI-peak token distributions,  foundation LLM characteristics, reasoning-intensive training paradigms, and model reasoning performance to future work.

\textbf{Further discussion on thinking token suppression (Section~\ref{subsec_validation_reasoning_tokens}, Figure~\ref{fig_validation}).}  
As shown in Figure~\ref{subsec_validation_reasoning_tokens}, while the overall trend indicates that LRMs' reasoning performance degrades as more thinking tokens are suppressed, the decline is not strictly monotonic. In some cases, performance improves temporarily. We conduct an empirical analysis to better understand this phenomenon. Specifically, we observe that when certain tokens are suppressed, the model tends to adopt alternative expressions to convey similar meanings. For instance, when the generation of the token ``Wait'' is suppressed, the model may instead produce phrases like ``But wait'', which could lead to slight improvements in performance. The observed performance fluctuations across different numbers of suppression tokens further support that these thinking tokens play a critical role in LRMs' reasoning capabilities.

\section{Additional Experimental Results}
\label{appendix_additional_exp}

Figures~\ref{fig_mipeak_llama8b}--\ref{fig_mipeak_llama70b2} illustrate the MI trajectories of various LRMs across more data samples.

\begin{figure}[t!]
    \centering
    \includegraphics[width=\linewidth]{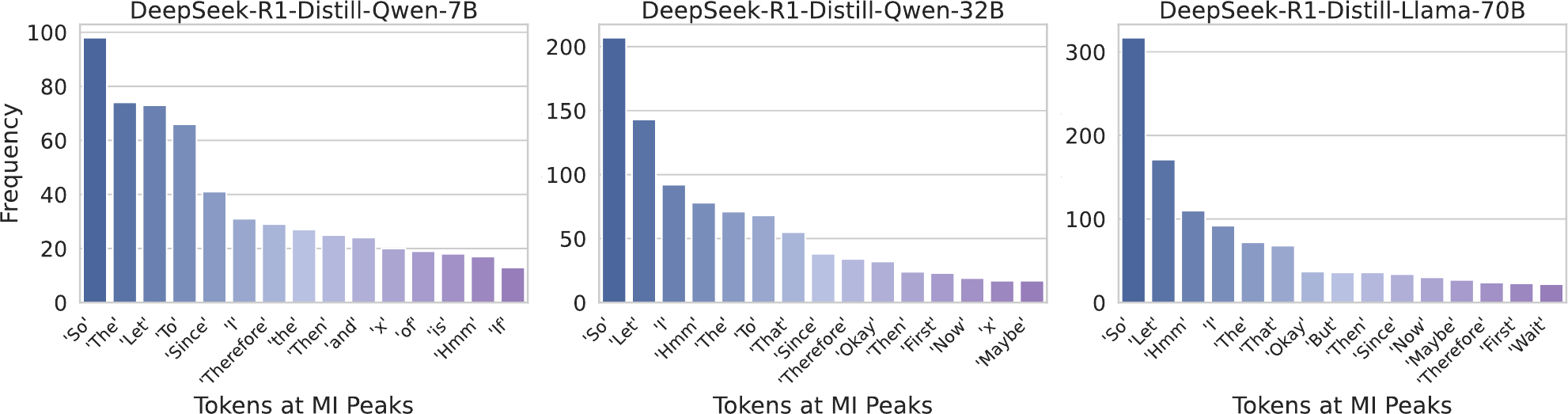}
    \caption{Frequency distribution of tokens at MI peaks for DeepSeek-R1-Distill-Qwen-7B, DeepSeek-R1-Distill-Qwen-32B, and DeepSeek-R1-Distill-Llama-70B.}
    \label{fig_word_freq_bar_appendix}
\end{figure}


\begin{figure}[t!]
    \centering
    \includegraphics[width=0.9\linewidth]{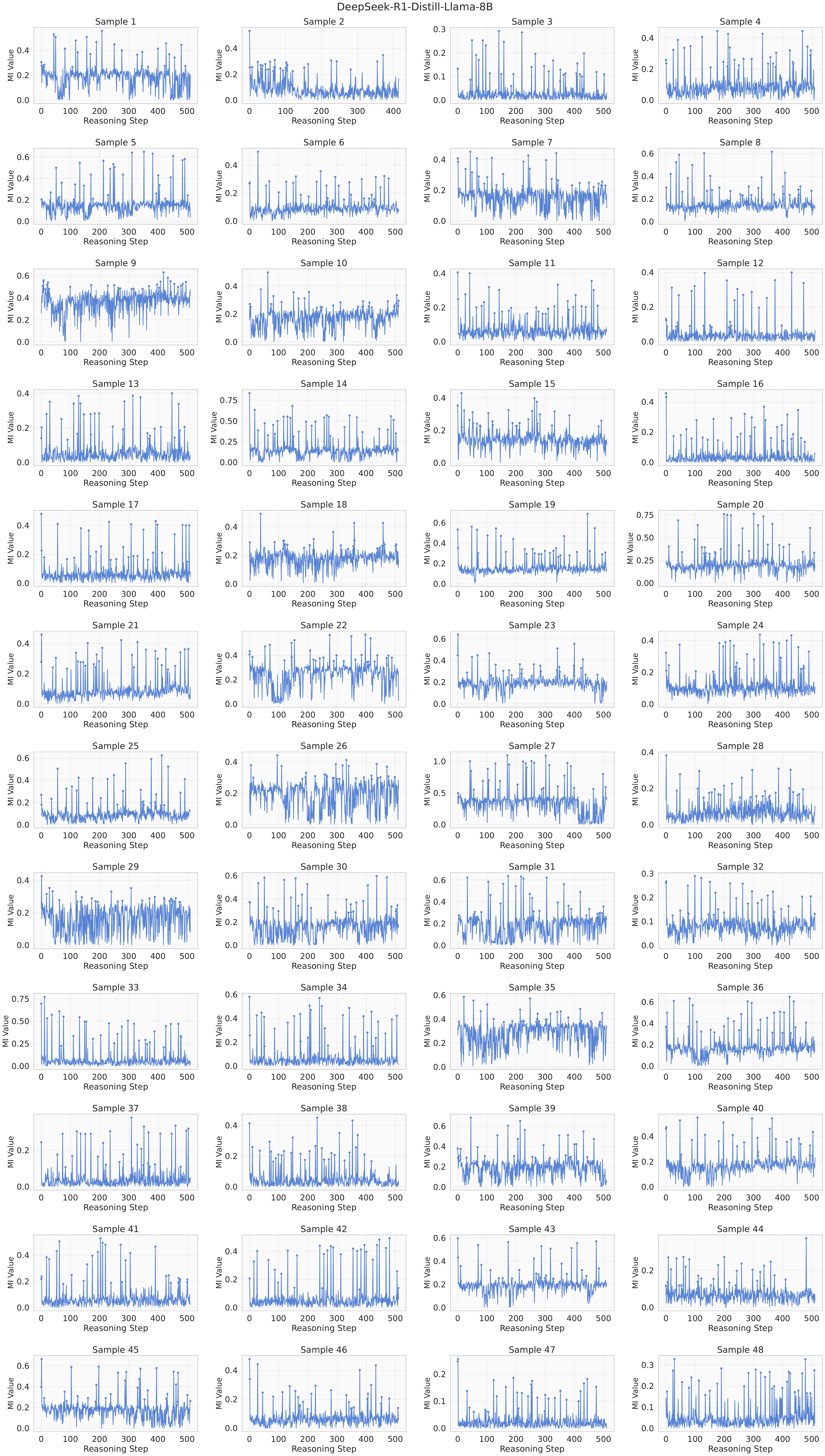}
    \caption{MI trajectories of DeepSeek-R1-Distill-Llama-8B.}
    \label{fig_mipeak_llama8b}
\end{figure}

\begin{figure}
    \centering
    \includegraphics[width=0.84\linewidth]{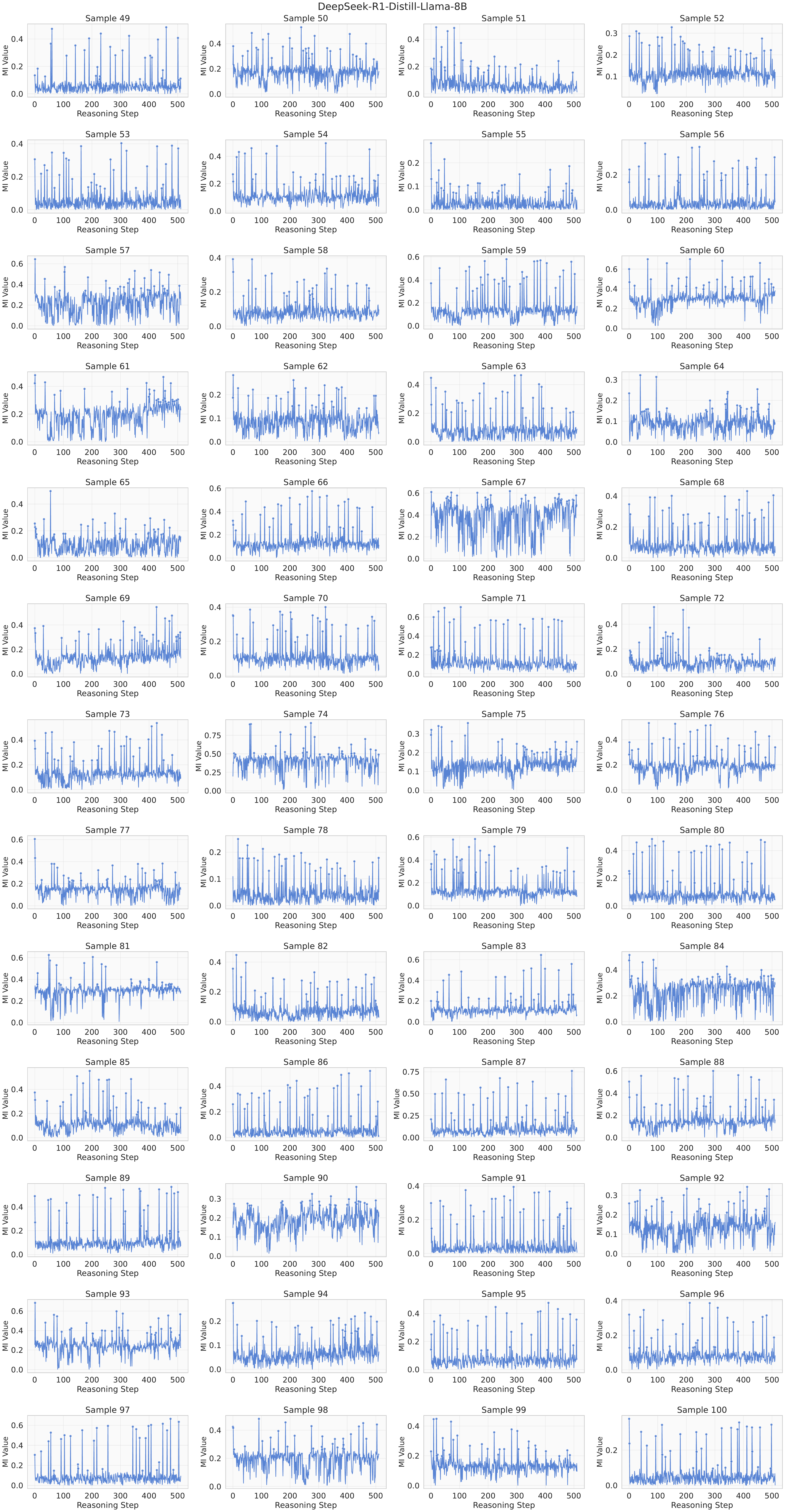}
    \caption{(Continued) MI trajectories of DeepSeek-R1-Distill-Llama-8B.}
    \label{fig_mipeak_llama8b2}
\end{figure}

\begin{figure}
    \centering
    \includegraphics[width=0.9\linewidth]{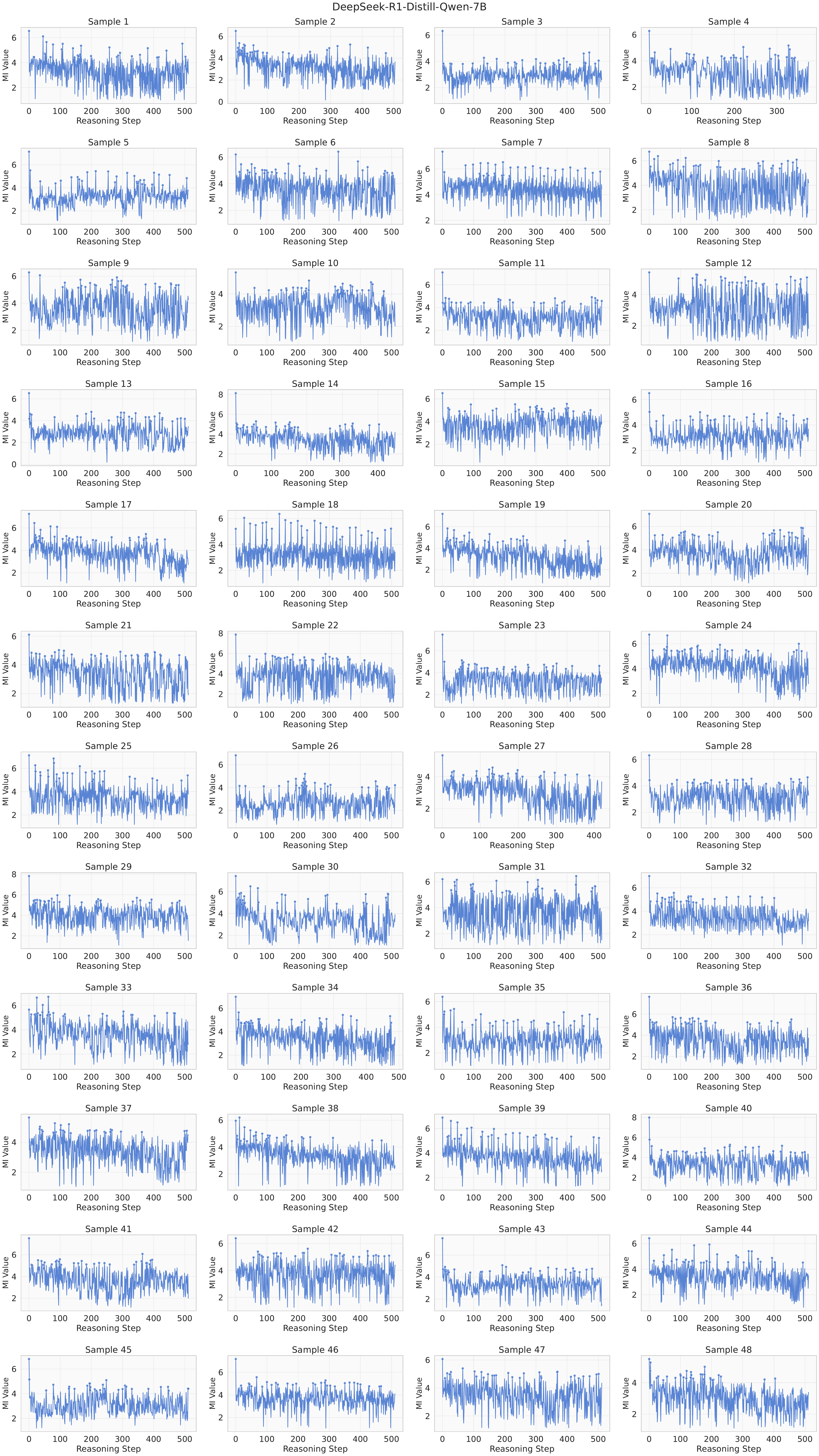}
    \caption{MI trajectories of DeepSeek-R1-Distill-Qwen-7B.}
    \label{fig_mipeak_qwen7b}
\end{figure}

\begin{figure}
    \centering
    \includegraphics[width=0.85\linewidth]{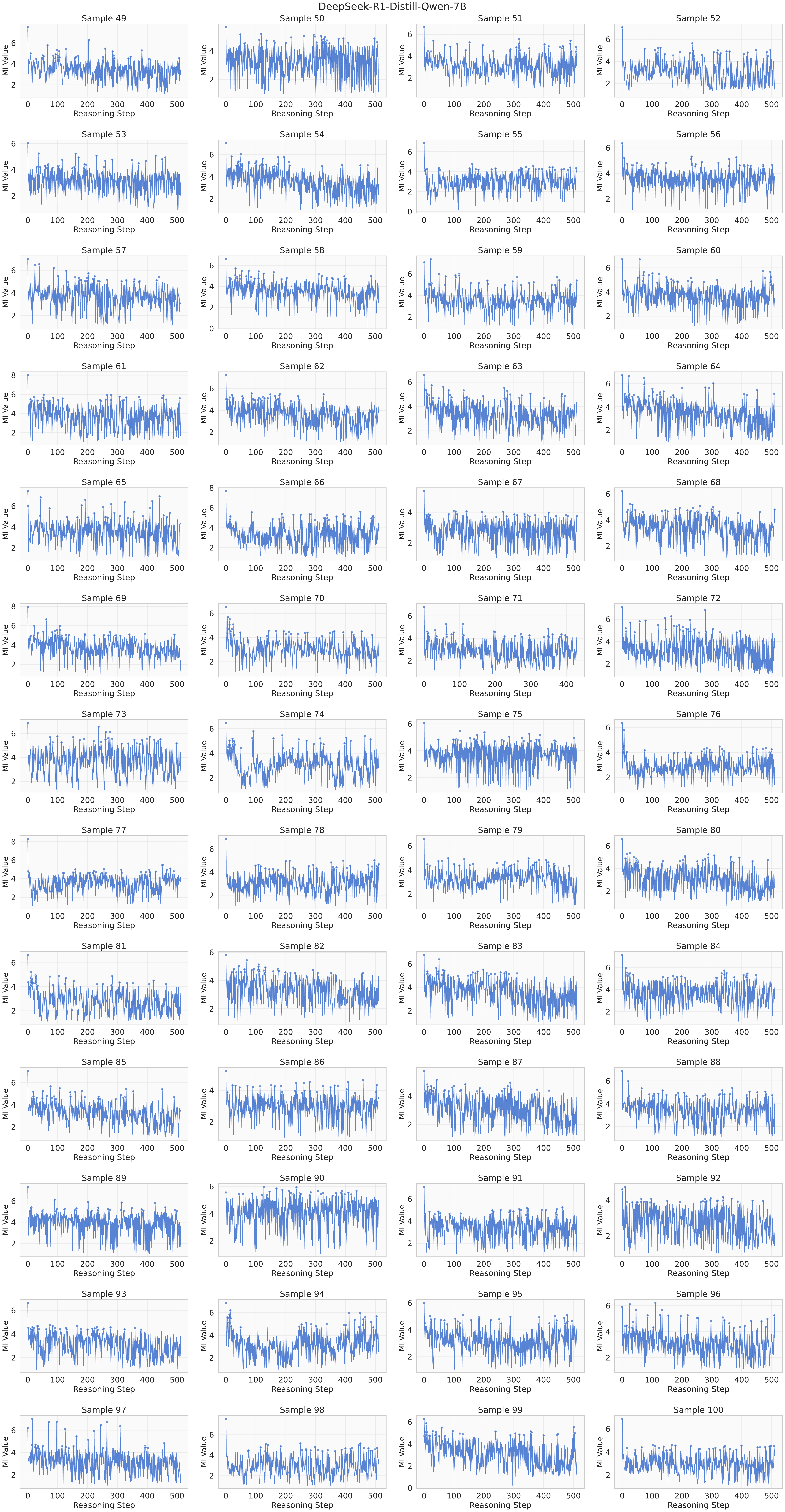}
    \caption{(Continued) MI trajectories of DeepSeek-R1-Distill-Qwen-7B.}
    \label{fig_mipeak_qwen7b2}
\end{figure}

\begin{figure}
    \centering
    \includegraphics[width=0.9\linewidth]{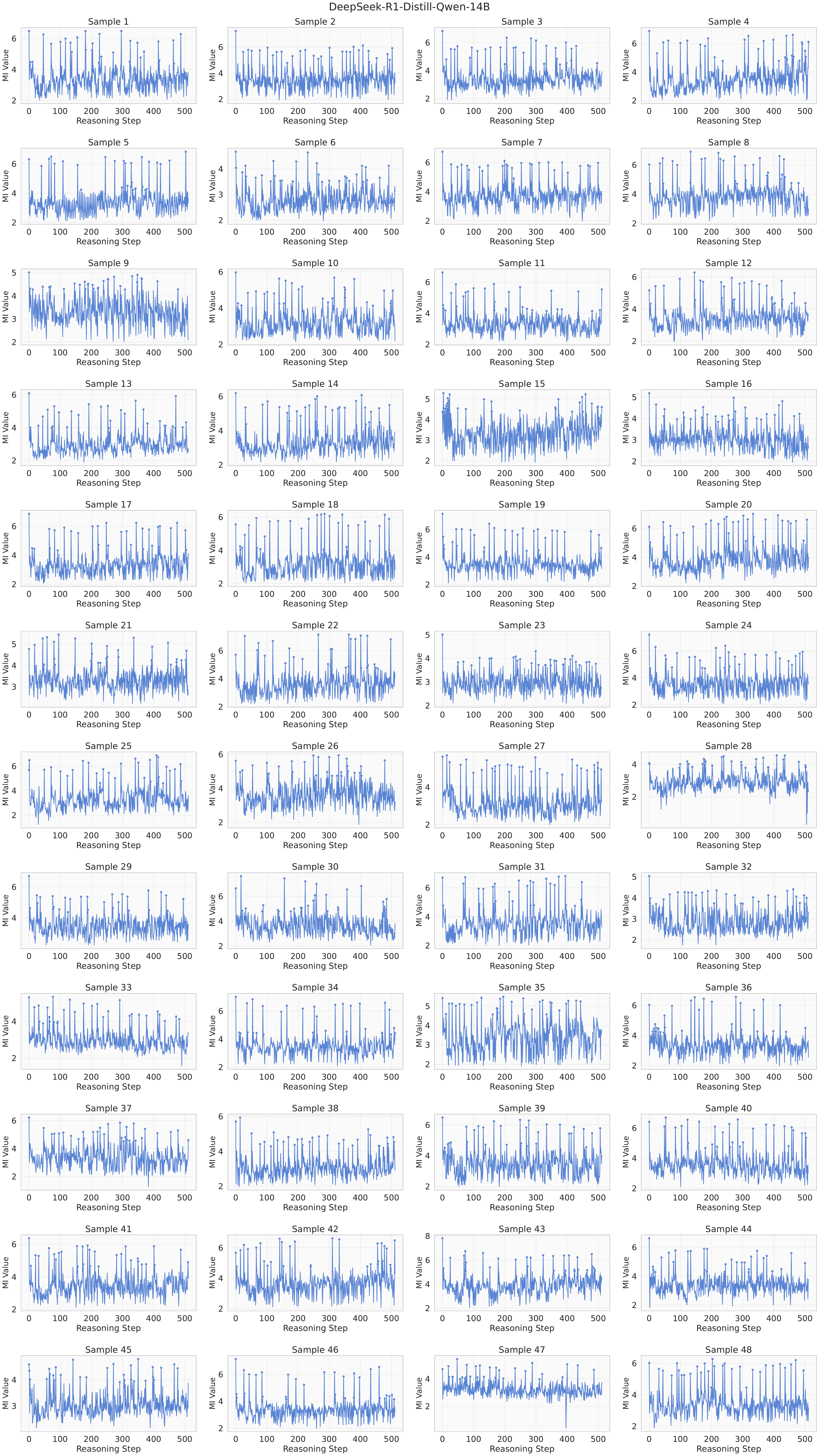}
    \caption{MI trajectories of DeepSeek-R1-Distill-Qwen-14B.}
    \label{fig_mipeak_qwen14b}
\end{figure}

\begin{figure}
    \centering
    \includegraphics[width=0.85\linewidth]{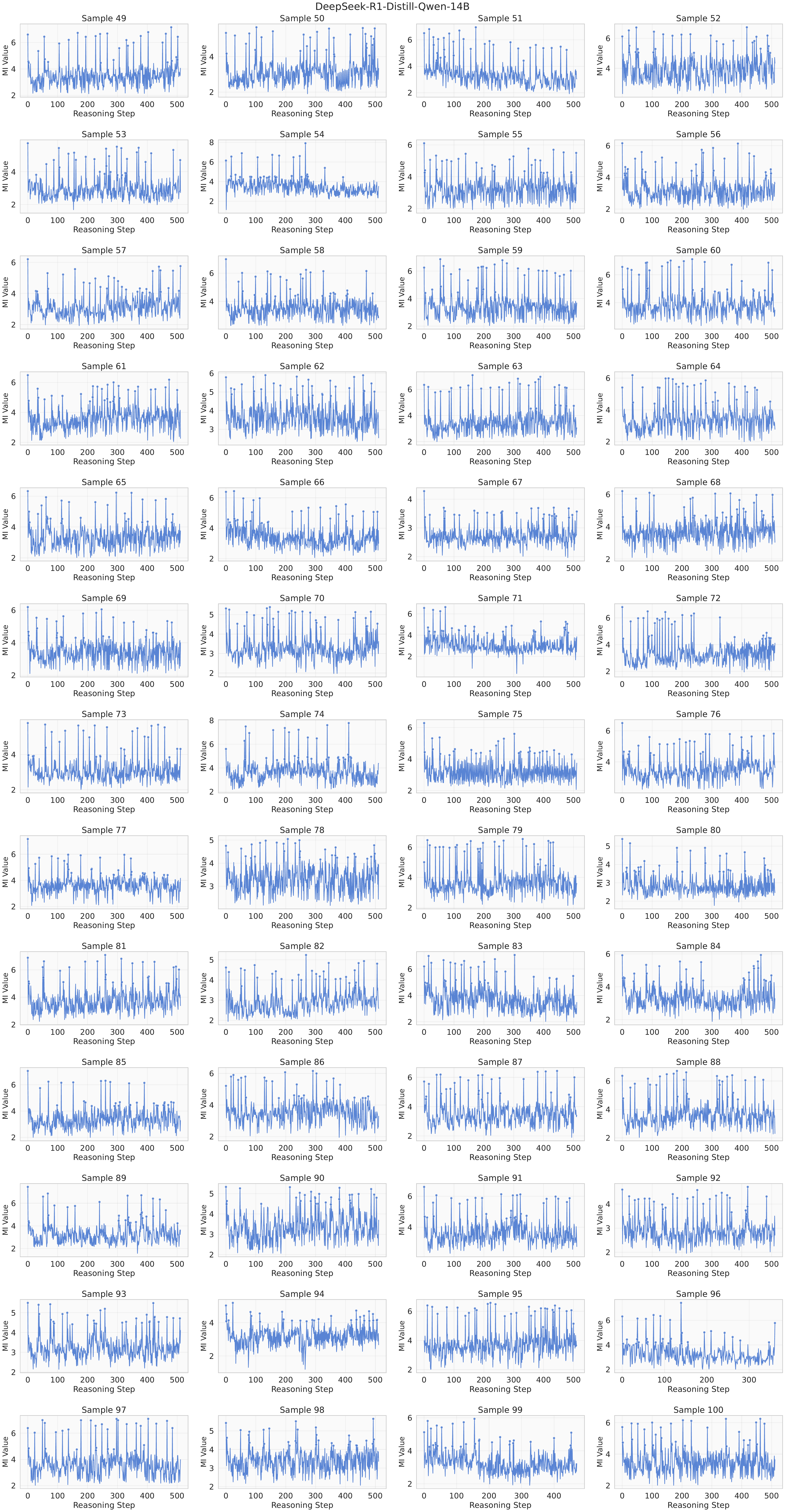}
    \caption{(Continued) MI trajectories of DeepSeek-R1-Distill-Qwen-14B.}
    \label{fig_mipeak_qwen14b2}
\end{figure}


\begin{figure}
    \centering
    \includegraphics[width=0.9\linewidth]{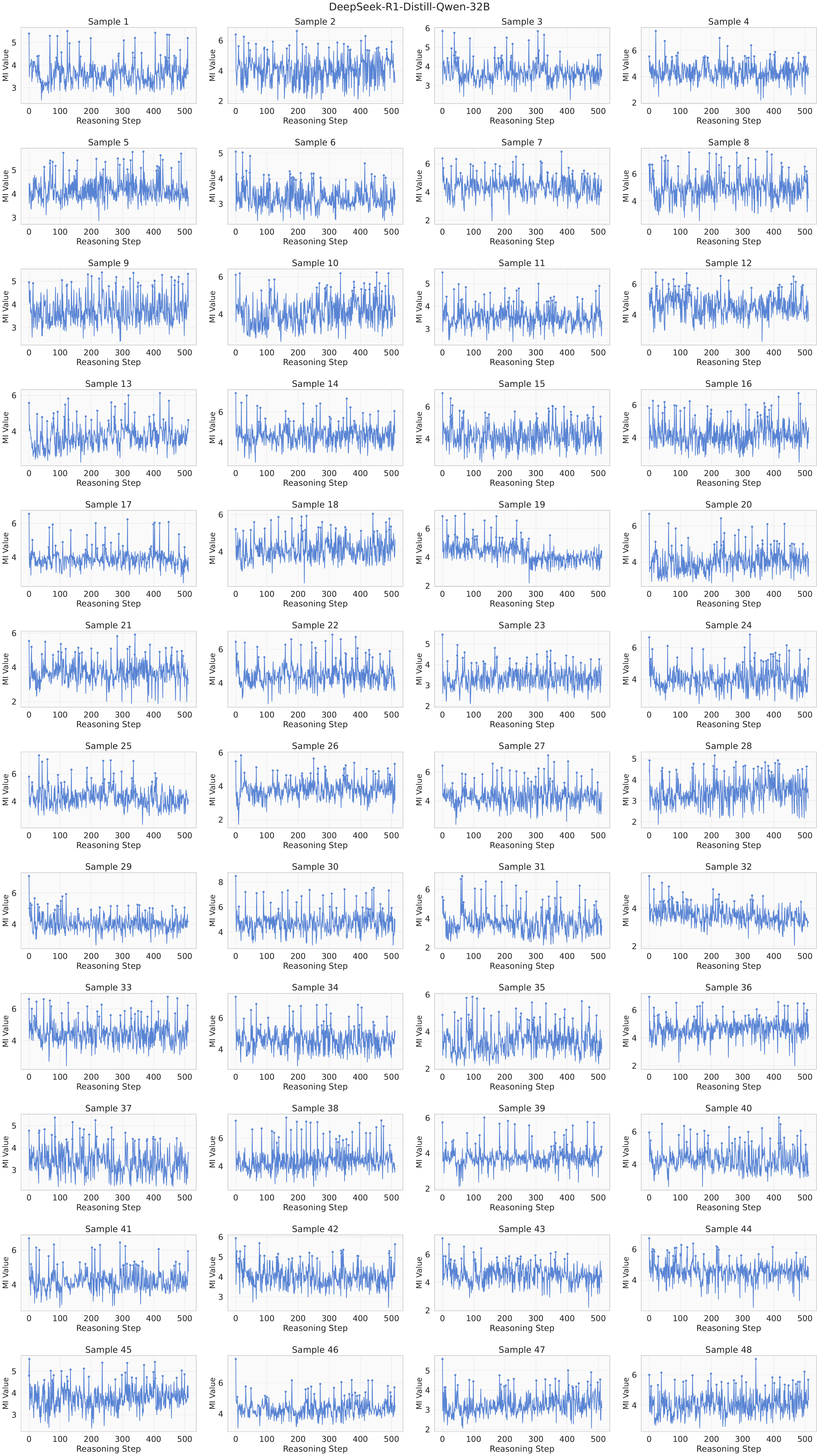}
    \caption{MI trajectories of DeepSeek-R1-Distill-Qwen-32B.}
    \label{fig_mipeak_qwen32b}
\end{figure}

\begin{figure}
    \centering
    \includegraphics[width=0.85\linewidth]{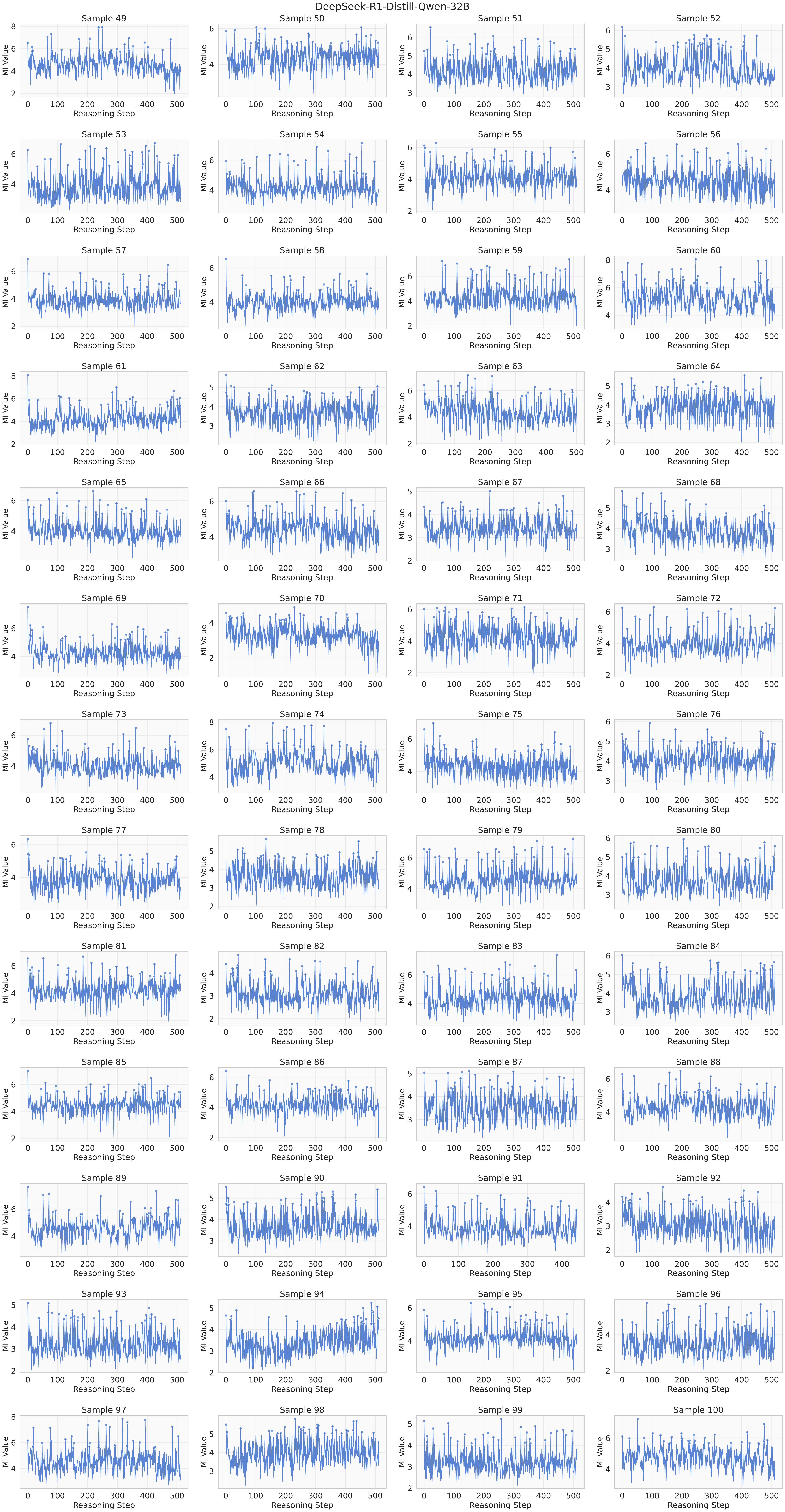}
    \vspace{-6pt}
    \caption{(Continued) MI trajectories of DeepSeek-R1-Distill-Qwen-32B.}
    \label{fig_mipeak_qwen32b2}
\end{figure}

\begin{figure}
    \centering
    \includegraphics[width=0.9\linewidth]{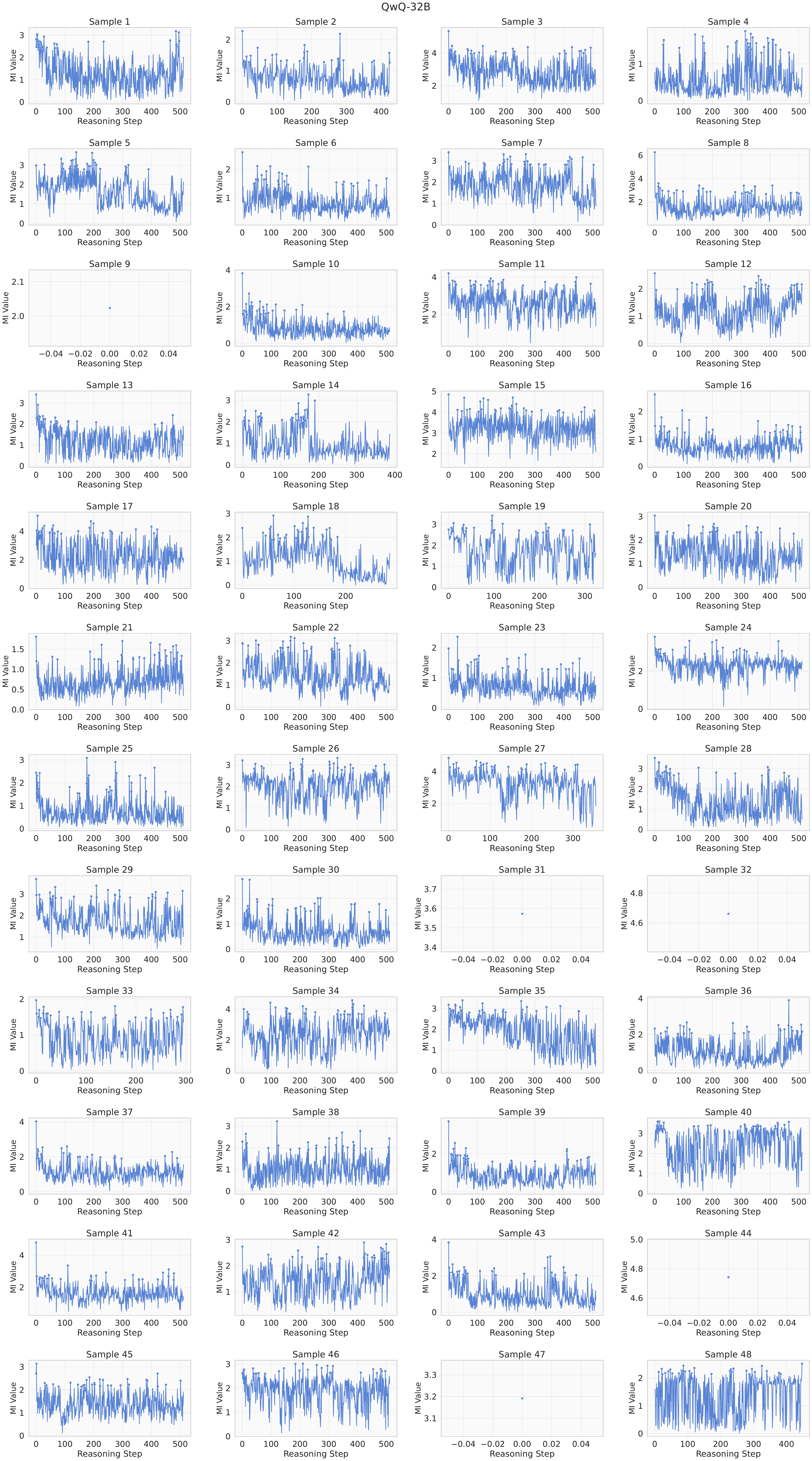}
    \vspace{-6pt}
    \caption{MI trajectories of QwQ-32B.}
    \label{fig_mipeak_qwq32b}
\end{figure}

\begin{figure}
    \centering
    \includegraphics[width=0.83\linewidth]{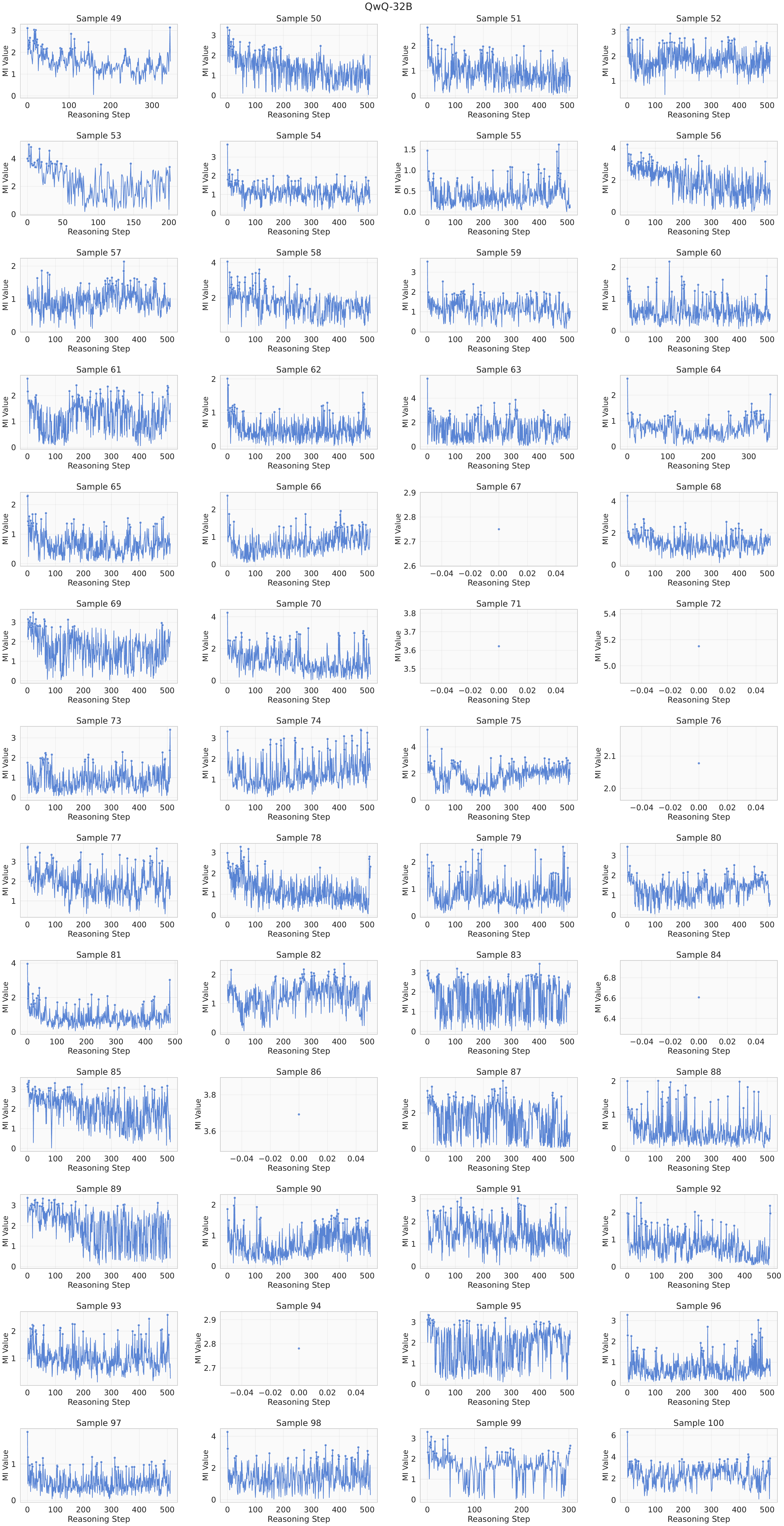}
    \vspace{-6pt}
    \caption{(Continued) MI trajectories of QwQ-32B.}
    \label{fig_mipeak_qwq32b2}
\end{figure}

\begin{figure}
    \centering
    \includegraphics[width=0.9\linewidth]{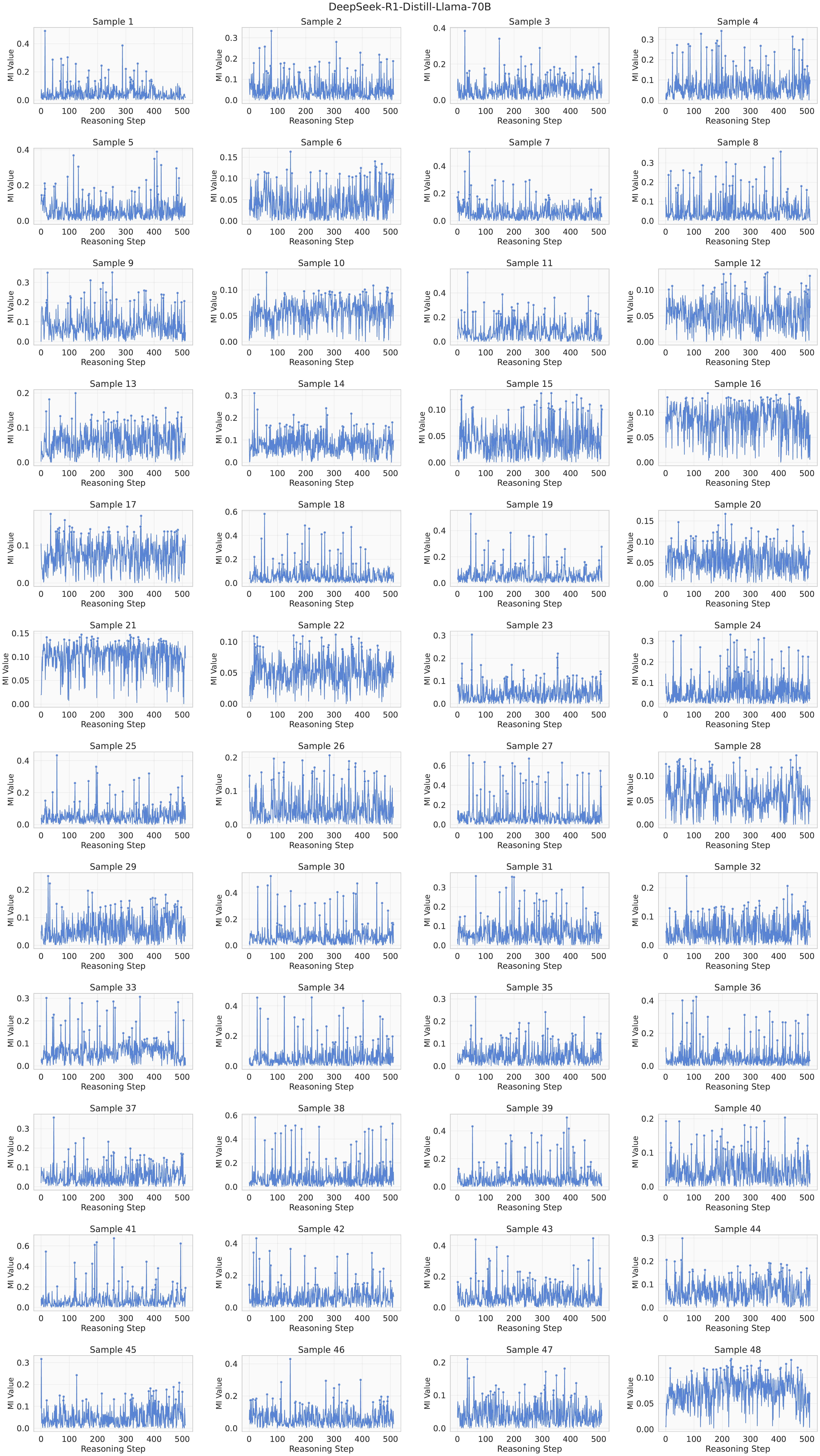}
    \caption{MI trajectories of DeepSeek-R1-Distill-Llama-70B.}
    \label{fig_mipeak_llama70b}
\end{figure}

\begin{figure}
    \centering
    \includegraphics[width=0.85\linewidth]{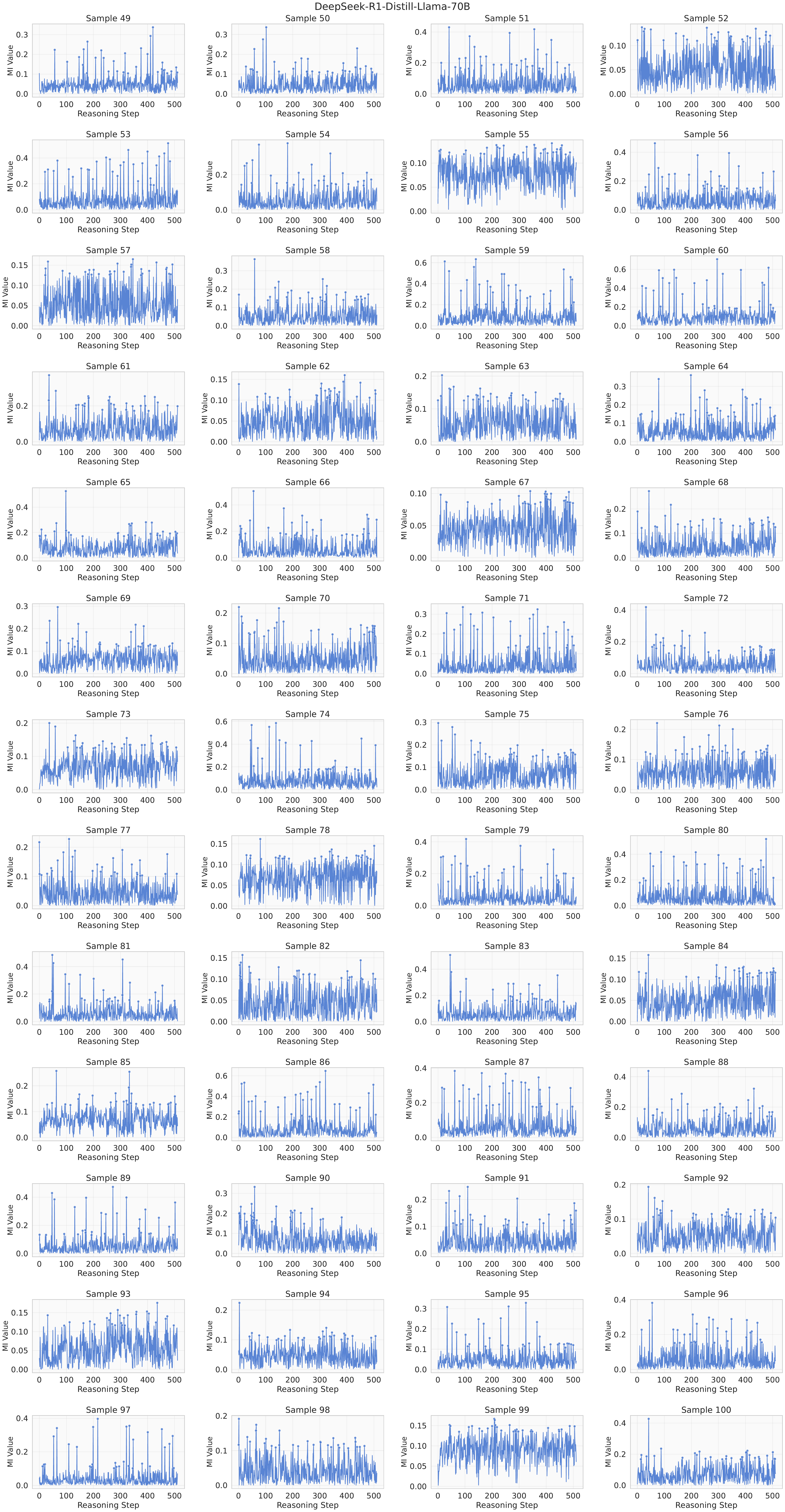}
    \vspace{-6pt}
    \caption{(Continued) MI trajectories of DeepSeek-R1-Distill-Llama-70B.}
    \label{fig_mipeak_llama70b2}
\end{figure}


\end{document}